\documentclass[letterpaper,10pt]{article}

\usepackage{probml}

\ShortHeadings{RAMP: Recognition parametrisation by Amortised Message Passing}
\usepackage{blindtext}
\usepackage{wrapfig}

\usepackage{xcolor}
\usepackage{subcaption}
\usepackage{mathtools}
\usepackage{amsmath}
\usepackage{amsthm}
\usepackage{amssymb}
\usepackage{import}
\usepackage{bm}
\usepackage{tikz}
\usetikzlibrary{positioning, arrows.meta, fit}
\tikzoption{fit to node}{\tikzset{shift=(#1.south west), x=(#1.south east), y=(#1.north west)}}
\usepackage{xspace}
\usepackage{microtype}
\usepackage{graphicx}
\usepackage{booktabs} % for professional tables
\usepackage{makecell}
\usepackage{multirow}
\newcommand\eqconst{\overset{+c}{=}}
\usepackage{wrapfig}
\usepackage[percent]{overpic}

\usepackage{algorithm,algorithmic}
\usepackage{placeins}

\usepackage{xr}
\newtheorem{assumption}[theorem]{Assumption}
\newcommand{\eqlft}{(}
\newcommand{\eqrgt}{)}
\creflabelformat{equation}{\eqlft#2#1#3\eqrgt}

\newcommand{\crefalt}[1]{%
\begingroup%
\let\eqlft\relax%
\let\eqrgt\relax%
\cref{#1}%
\endgroup%
}

\usepackage{xcolor}
\usepackage{listings}
\definecolor{codegreen}{rgb}{0,0.6,0}
\definecolor{codegray}{rgb}{0.5,0.5,0.5}
\definecolor{codepurple}{rgb}{0.58,0,0.82}
\definecolor{backcolour}{rgb}{0.95,0.95,0.92}

\lstdefinestyle{mystyle}{
	backgroundcolor=\color{backcolour},
        commentstyle=\color{codegreen},
	keywordstyle=\color{magenta},
	numberstyle=\tiny\color{codegray},
	stringstyle=\color{codepurple},
	basicstyle=\ttfamily\footnotesize,
	breakatwhitespace=false,         
	breaklines=true,                 
	captionpos=b,                    
	keepspaces=true,                 
	numbers=left,                    
	numbersep=5pt,                  
	showspaces=false,                
	showstringspaces=false,
	showtabs=false,                  
	tabsize=2
}

\usepackage{mbubble}
\mbubblestyle{ms}{color=blue,font={\tiny},label=ms,hlcolor=blue!15}

\input{m.symbols}

\newcommand{\rampprior}[1]{f^{#1}_0\left(z_{#1}\right)}
\newcommand{\rampf}[2]{f^{#1}_{#2}(z_{#1}|\setX^{#1}_{#2})}
\newcommand{\rampfzx}[2]{f^{#1}_{#2}(z_{#1}|x_{#2})}
\newcommand{\rampff}[2]{f^{#1}_{#2}}
\newcommand{\rampfn}[3]{f^{#1}_{#2}\left(z_{#1}|{\setX^{#1}_{#2}}^{(#3)}\right)}
\newcommand{\rampemp}[2]{p_{\text{emp}}\left(\setX^{#1}_{#2}\right)}
\newcommand{\rampempname}{p_{\text{emp}}}
\newcommand{\rampF}[2]{F^{#1}_{#2}\left(z_{#1}\right)}

\newcommand{\rpmf}[1][p]{f_{#1}\left(\Z|x_{#1}\right)}
\newcommand{\rpmfn}[2]{f_{#1}\left(\Z|x_{#1}\nn[#2]\right)}
\newcommand{\rpmemp}[1][p]{p_{\text{emp}}\left(x_{#1}\right)}
\newcommand{\rpmempname}{\ensuremath{p_{\text{emp}}}\xspace}
\newcommand{\rpmF}[1][p]{F_{#1}\left(\Z\right)}

\newcommand{\eff}{\mathcal{F}}

\newcommand\nn[1][n]{^{(#1)}}

\newcommand\XN[1][N]{\ensuremath{\mathbb{X}\nn[#1]}}

\newcommand\T{\setT}
\newcommand\G{\setG}
\newcommand\E{\setE}
\newcommand\Ezz{\setE_{\text{zz}}}
\newcommand\Ezx{\setE_{\text{zx}}}
\newcommand\V{\setV}
\newcommand\X{\setX}
\newcommand\Z{\setZ}

\DeclareDotOptCommand\Tkj{2}.{k}{j}{\T^{#1}_{#2}}
\DeclareDotOptCommand\Xkj{2}.{k}{j}{\X^{#1}_{#2}}

\DeclareDotOptCommand\msg{2}.{}{}{_{{#1}\to{#2}}}
\DeclareDotOptCommand\bck{2}.{}{}{_{{#1}\leftarrow{#2}}}
\DeclareDotOptCommand\alphakj{2}.{k}{j}{\alpha\msg.{#2}{#1}(z_{#1})}

\DeclareDotOptCommand\Ne{1}.{}{\partial_{{#1}}}
\DeclareDotOptCommand\Nebar{1}.{}{\text{N\=e}\optly{#1}.{(#1)}.}

\newcommand{\subfigref}[2]{Figure~\hyperref[#1]{\ref*{#1}#2}}

\newcommand{\setE}{{\mathcal{E}}}

\newcommand{\setG}{{\mathcal{G}}}

\newcommand{\setT}{{\mathcal{T}}}
\newcommand{\setU}{{\mathcal{U}}}
\newcommand{\setV}{{\mathcal{V}}}

\newcommand{\setX}{{\mathcal{X}}}

\newcommand{\setZ}{{\mathcal{Z}}}

\begin{document}

\title{
RAMP: Recognition parametrisation by\\ Amortised Message Passing
}

\author[1]{Lior Fox}
\author[1,2]{Kai Biegun}
\author[1]{James Heald}
\author[1]{Samo Hromadka}
\author[1]{Arielle Rosinski}
\author[1]{Maneesh Sahani}

% \author[1,$*$,$\dagger$]{First Author}
% \author[1,2,$*$]{Second Author}
% \author[2]{Third Author}

\affil[1]{Gatsby Computational Neuroscience Unit}
\affil[2]{Centre for Artificial Intelligence}
\affil[ ]{University College London}
% \affil[$*$]{Equal contribution}
% \affil[$\dagger$]{Correspondence to \url{email@example.com}}

\maketitle

\begin{abstract}
  A central aim of unsupervised learning is to uncover latent factors that explain dependencies
  among observations.  Probabilistic models typically achieve this by introducing multiple latent
  variables linked through a graph of conditional relationships, with distributional parameters
  and their dependence learnt from data.  Learning relies either on distributional choices that
  allow tractable belief propagation, or on approximations that scale poorly with model size and
  complexity. We build on the recently developed recognition-parametrised modelling paradigm to
  propose an alternative approach: RAMP, a method that implicitly defines latent structure by
  learning a flexible, nonlinear, amortised message-passing framework.  We show that RAMP enables
  efficient likelihood-based recovery of latent-variable distributions within expressive nonlinear
  models acting on complex high-dimensional data. 
\end{abstract}

% \lior{
% TODO camera ready:\\
% - Structural mis-specification experiments: appendix? and ref from main text \\
% - Non-linear trees: expand current appendix. perhaps train the BBVE baseline on the entire $(\beta,p)$ sweep created for RAMP \\
% - EP baseline for the non-linear tree sweep? \\
% - Pay our tribute and add some reference the reviewer had asked for
% }

\section{Introduction}
We understand the world by building a mental model of it.
Good models help to describe complex phenomena simply, by identifying and recovering relevant latent factors that explain patterns of regularity and dependence amongst observations.
% This filters the ``signal'' (i.e., aspects directly captured by the model) from the ``noise'' (all other aspects of the observations).
% This signal might be a caricature of the full complexity observed in the real-world phenomenon, but often admits a much simpler mechanistic description. 
% The art of modelling is in balancing this trade-off. %for the relevant problem at hand.
%
Mechanising such model building has been an overarching goal for artificial intelligence and machine learning for decades.
Probabilistic methods are particularly well-suited to this task.
They express non-deterministic links between variables, represent uncertainty in a mathematically coherent way, and offer a principled approach for inference in the form of the posterior distribution over hidden variables, given the observations.

Yet, in the tradeoff between expressivity and simplicity, probabilistic methods often lean to the latter.  
The challenge is that computations in probabilistic models depend on integration over variables, and these integrals are only tractable for the simplest of distributional choices.  
Numerical or approximate methods have been developed in many other settings, but these become increasingly forbidding as the complexity of probabilistic dependence increases.
%
% This is because the inference problem is fundamentally hard: 
% in general, many different combinations of the latent variables could give rise to the same observation, and in order to find the posterior it is necessary to consider all of them.
% Unless some strong distributional assumptions are added, exact inference in even slightly complicated models is therefore intractable, and some approximations have to be made.
%
% % % % Maybe revert to this line of arguemnt.
% % % % It might be too ambitious to cover in 2-3 sentences though.
% At the other end of the spectrum, pattern-recognition methods learn to directly approximate a complicated target function with a set of expressive, adaptive components. 
%
This need to approximate has been a major driving force behind attempts to integrate neural network approaches, which excel at learning to approximate complicated functions, into probabilistic methods. 
Despite some success, a general unified approach is still missing for this integration.
One issue is that the nature of approximations offered by the two approaches is not always compatible, and the internal representations generated by generic deep learning methods do not easily lend themselves to probabilistic interpretation.

This paper presents a novel approach towards merging adaptive neural-network components into a probabilistic framework, based on three core ideas.
The first is to train a set of networks to collectively perform inference, leveraging the ability of pattern-recognition methods to amortise complicated transformations.
The second is to constrain the way in which the outputs of these networks are interpreted, transformed, and combined together.
These constraints, together with the learning objective itself, are derived directly from probabilistic considerations encoded in a graphical model.
Finally, the third core idea is that of recognition-parametrisation, allowing the \textit{inference} (``recognition'') procedure to directly define the model itself.

\begin{figure}
    \centering
    \includegraphics[width=0.4\textwidth,page=2]{figures/tikz_figs/tree_conditional_independence.pdf}%
    \includegraphics[width=0.4\textwidth,page=1]{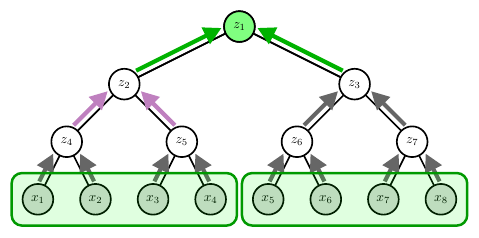}%
    \caption{
        RAMP works by learning a system of amortised message-passing transformations (directed arrows) within a tree structured graphical model. At each latent node, the incoming messages (bold coloured arrows) serve as a set of \textit{recognition factors}, that are trained to model the specific conditional independence relationships among observations induced by the corresponding latent (denoted by the coloured bounding boxes). The use of a single set of message-passing functions, shared between all the nodewise RPMs, leads to every message reporting a sufficient statistic for the set of its incoming observations, relative to the rest of the observations.
    }
    \label{fig:ramp_arch}
\end{figure}

      %   \lior{complete}
      % Overview of RAMP. In a tree structured graphical model with latent and observed variables,
      % different latent variables account for different patterns of conditional
      % independences among observations (highlighted boxes on left and right). RAMP leverages these conditional independence relationships to define a nodewise RPM at each latent node, parameterised by a set of recognition factors. 
%
    % A tree stuctured graphical model with latent ($\setZ$) and observed ($\setX$) variables.
    % Different latent variables (highlighted $z_i$ and $z_j$, left and right) account for different patterns of conditional independence among observations.
    % At each latent variable, a nodewise model for $p(z_k,\setX)$ is parameterised by a set of recognition factors (incoming arrows into $z_i$,$z_j$).

    % Two latent variables that capture different patterns of conditional independence implied by a common graphical model.
    % A nodewise-model is constructed at each latent, parameterised by a set of recognition factors (incoming arrows into $z_i$,$z_j$).
    %
    %
    %
    %
    %  %
    %
    %
    %
    %
    %
    % Two nodewise models capturing different patterns of conditional independence implied by common graphical model.  Each model is parametrised by a set of recognition factors (incoming arrows into $z_i$,$z_j$).
    % %
    % Recognition factors are parametrised with neural networks across each edge, and follow a message-passing structure, reusing elements between models (green arrows in the left figure and purple arrows in the right).

\section{Mathematical Background}

% We briefly review some necessary mathematical background.  Further details appear in the Appendix.

\subsection{Recognition-Parametrised Models}

The Recognition-Parametrised Model \citep[RPM;][]{RPM} describes $P$ observed variables 
$\X = \{x_1 \dots x_P\}$ in terms of one or more latents $\Z = \{z_1 \dots z_K\}$, 
such that the dependence amongst the observations is captured by $\Z$.  Such a model would 
conventionally be defined by parametrising a prior $p(\Z)$ and generative 
conditionals $p(x_p | \Z)$.  The RPM is semi-parametric, combining a 
nonparametric summary of the $x_p$-marginals, $\rpmemp$, with normalised recognition factors $f(\Z|x_p)$
that map observed values to posterior belief distributions over the latent variables.  The joint
is then
\begin{align}\label{eq:simple-rpm}
  p(\Z,\X) &= p(\Z) \prod_p \frac {\rpmf}{\rpmF} \rpmemp &
  &\text{with}&
  \rpmF &= \intdx[dx_p] \rpmf \rpmemp \,.
\end{align}
Like Walker et al., we take $\rpmemp = \frac 1N \sum_{n=1}^N \delta(x_p - x_p\nn)$, 
so that $\rpmF = \frac 1N \sum_{n=1}^N \rpmfn pn$.

The RPM is a normalised model with a proper likelihood function.  Recognition parameters can be learnt by optimising a free energy bound
\citep{neal+hinton:1998} as discussed in \cref{sec:supp-free-energies}.

\subsection{Tree-Structured Graphical Models}
We consider undirected, tree-structured, graphical models.
To simplify presentation, we focus on models with pairwise (and singleton) factors, but the key ideas we develop generalise to any tree-structured factor graph (see \cref{subsec:ramp_general_factor_graphs}).

Let $\T = (\V,\E)$ be a tree with nodes $\V$ and (undirected) edges $\E$.
We identify the nodes with variables in a probabilistic model, partitioned into \textit{latent variables} $\Z$ and \textit{observed variables} $\X$; $\V=\Z\cup\X$,
and refer to the nodes interchangeably either by the variable index (e.g.\ $k\in \{1\dots K\}$) or by the associated variable label (e.g.\ $z_k$).
Without loss of generality%
\footnote{If an observation were at an internal node it would partition $\T$ into two or more subtrees within which inference and learning would proceed independently.}%
, we take the observations to be leaf nodes of the tree; implying that $\E$ is partitioned into sets linking two latents ($\Ezz$), or a latent to an observation ($\Ezx$); $\E=\Ezz\cup\Ezx$.  
This graphical structure defines a family of joint probability distributions over observed and latent variables, satisfying the following factorisation:
\begin{equation}\label{eq:tree-factors}
    p(z_{1:K},x_{1:P}) = \frac{1}{Z}\prod_k\phi_k(z_k)\prod_p\psi_p(x_p)
    \times
    \prod_{\mathclap{(k,k')\in\setE_{\text{zz}}}}\Phi_{kk'}(z_k,z_{k'})
    \prod_{\mathclap{(k,p)\in\setE_{\text{zx}}}}\Psi_{kp}(z_k,x_p)
\end{equation}

\subsubsection{Trees and Conditional Independence}\label{sec:leveraging-ci}
In a tree-structured graph, the removal of a latent node $z_{k}$ with degree $\deg(k)$ partitions the graph $\T$ into a collection of $\deg(k)$ disjoint sub-trees, one for each edge at $k$ (see \cref{fig:ramp_arch}).  
We label the sub-trees by the neighbour to which the corresponding edge connects: $\lbrace \Tkj \rbrace_{j\in\Ne.k}$, where $\Ne.k$ is the neighbourhood of $z_k$ (note that $j$ may index an observed variable, in which case $\Tkj$ contains the single node $x_j$).
As $\Tkj$ is defined by the removal of $z_k$ from the graph, $\Tkj\neq\Tkj.jk$.
 
Let $\Xkj$ be the set of observed variables in the tree $\Tkj$. 
The graphical model defined by $\T$ implies that the sets $\Xkj$ for $j\in\Ne.k$ are conditionally independent given $z_k$.
% \begin{equation*}
% p(z_k,\X)=p(z_{k})\prod_{j\in\Ne.k}p(\Xkj|z_{k})
% \end{equation*}
% \begin{equation*}
% p(z_k,\setX)=p(z_{k},\setX^k_{j_1},\dots,\setX^k_{j_{\deg(k)}})=p(z_{k})\prod_{j\in\Ne.k}p(\setX^k_{j}|z_{k})
% \end{equation*}
We refer to this property as the $z_k$-``nodewise'' conditional independence partition of the observations.
These conditional independence properties provide the principal signal for learning within a nonlinear tree-structured model: $z_k$ is defined by the need to uniquely model the dependence amongst the different subsets of observations $\{\Xkj\}$.
The induced partition depends on where the latent variable sits within the tree.
For example, in the graph shown in \cref{fig:ramp_arch}, $x_2$ is conditionally independent of $x_3$ given $z_2$, but not $z_1$.
From this point of view, a latent variable is only discoverable on the basis of the conditional independence it induces amongst observations---to paraphrase \citet{firth1957}, ``you shall know a latent by the observables it keeps''. 
%
% Consequently, two hypothesised latent variables that induce the same structure of conditional independences are indistinguishable from each other, without further strong distributional assumptions. 
%
Thus, we focus on trees in which each latent variable induces a unique partition of the observations.  This is captured by the following assumption:\\[-3.5ex]
\begin{assumption}\label{assumption:tree_structure}
    For every pair $z_i$, $z_j$ ($i\neq j$) in $\setT$, there exist $x,x'\in\setX$ that belong to the same sub-tree of $z_i$ but to different sub-trees of $z_j$.
\end{assumption}
\vspace{-2ex}
% Note that the distribution over variables within a given $\setX^k_j$ might be \textit{further} factorised, conditioned on other latents (but not on $z_{k}$ alone).

% \subsubsection{Nodewise Conditional Independences Imply Graph Structure}

We have focused on conditional independence induced on the observed (leaf) nodes, but independence amongst latents is an equally important aspect of the model.  
For example, in a Markovian state-space model, these independencies define the Markov chain structure of the latent series $z_t$. 
Fortunately, as the following lemma shows, a model defined by all nodewise conditional independence relationships on observations necessarily also captures conditional independence amongst the latents.
% The nodewise models of \cref{eq:local_rpm_zk} capture the conditional independence relationships between individual latent nodes and observations, 
%% but they do not \textit{directly} model conditional independencies amongst the different latent nodes. 
% %
% This latent independence structure is nonetheless an important aspect of the model, encoded in the original graphical structure. 
% %
% %
% In fact, as the following lemma shows, simultaneously satisfying all of the nodewise conditional independences is sufficient to ensure that all the latent-to-latent conditional indepencies of the original graph are satisfied as well.
%
\begin{lemma}\label{lemma:local_CIs_suff}
    Let $\T = (\V, \E_{\T})$ satisfy Assumption \ref{assumption:tree_structure}.
    Let $\G = (\V, \E_{\G})$ be a connected graph on the same nodes, satisfying the nodewise conditional independence properties implied by $\T$. Then, $\E_{\G}=\E_{\T}$.
\end{lemma}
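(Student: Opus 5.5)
We read ``$\G$ satisfies the nodewise conditional independence properties implied by $\T$'' graph-theoretically: for every latent $z_k$, deleting $z_k$ from $\G$ leaves the sets $\Xkj$, $j\in\Ne.k$ (neighbourhoods taken in $\T$), in different connected components of $\G\setminus\{z_k\}$. This is the separation reading of the global Markov property. The plan is to show $\E_\T\subseteq\E_\G$ and $\E_\G\subseteq\E_\T$. Since $\T$ is a tree and $\G$ is connected, the second inclusion together with connectivity of $\G$ already forces equality. Indeed, a connected subgraph of a tree on the same vertex set is the tree itself, so $\E_\G\subseteq\E_\T$ gives $\E_\G=\E_\T$. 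The main work is therefore to show that $\G$ contains no edge absent from $\T$.

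Take an edge $(u,v)\in\E_\G\setminus\E_\T$ and consider the cases.

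\emph{Both endpoints are observations.} Since $\T$ has at least one latent, the $\T$-path from $u$ to $v$ passes through some latent $z_k$, and $u,v$ lie in different sub-trees $\Xkj$ of $z_k$. The edge $(u,v)$ survives the deletion of $z_k$, which contradicts separation.

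\emph{One endpoint is an observation $x$, the other a latent $z_i$.} Let $z_k$ be $x$'s unique $\T$-neighbour, so $k\neq i$. Deleting $z_k$ separates $x$ from every observation outside $x$'s branch. By Assumption~\ref{assumption:tree_structure}, applied to the pair $(z_i,z_k)$ with the roles chosen appropriately, there is an observation $x'$ in a different $z_k$-sub-tree from $x$ that remains connected to $z_i$ in $\G\setminus\{z_k\}$. Then the path $x - z_i - \cdots - x'$ avoids $z_k$, a contradiction.

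\emph{Both endpoints are latents $z_i,z_j$, non-adjacent in $\T$.} Pick a latent $z_k$ strictly between them on the $\T$-path. We need observations $x$ reachable from $z_i$ and $x'$ reachable from $z_j$ inside $\G\setminus\{z_k\}$ that lie in different $z_k$-sub-trees. Then the edge $(z_i,z_j)$ joins them and again contradicts separation.

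The main obstacle is this reachability claim. A priori $\G$ is arbitrary, so one cannot assume that $z_i$ stays connected to the leaves of its own $\T$-branch once $z_k$ is removed. I would handle it by induction, first establishing $\E_\T\subseteq\E_\G$ so that $\T$-paths that avoid $z_k$ survive in $\G$.

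To prove $\E_\T\subseteq\E_\G$, suppose a $\T$-edge $(z_a,w)$ is missing from $\G$. Assumption~\ref{assumption:tree_structure} implies that each latent separates some pair of observations that no other latent separates. Using the observations of $\T^{a}_{w}$ and those on the other side, and the connectivity of $\G$, I would show that some other latent would have to separate that pair, which is a contradiction.

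If this route stalls, the fallback is to argue by induction on the number of latents: peel off a latent adjacent to leaves (a ``cherry''), whose incident edges are forced by separation, and contract it.
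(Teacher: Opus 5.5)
Your reduction to $\E_\G\subseteq\E_\T$ and your Case~1 are fine, but there is a genuine gap. Nowhere do you establish that a latent $z_i$ remains connected, inside $\G\setminus\{z_k\}$, to an observation lying in the $z_k$-sub-tree that contains $z_i$. Case~2 simply asserts that such an $x'$ exists ``by Assumption~\ref{assumption:tree_structure}''. That assumption is a statement about $\T$ and says nothing about connectivity in $\G$. Case~3 correctly names this as the obstacle and defers it to $\E_\T\subseteq\E_\G$, which is never proved.

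Worse, the fact you propose to base that inclusion on is false: a latent need not separate any pair of observations that no other latent separates. Let $z_0$ have three latent neighbours $z_1,z_2,z_3$, each carrying two observed leaves. Assumption~\ref{assumption:tree_structure} holds, yet every pair separated by $z_0$ is also separated by some $z_i$. The cherry-contraction fallback likewise presupposes that the cherry's edges are forced in $\G$, which is the same missing fact. The paper avoids the issue by reading the hypothesis as also transferring to $\G$ the dependence of $x_a,x_b$ given $z_k$. That directly gives it a $\G$-path from $x_a$ to $x_b$ through $z_i$ avoiding $z_k$; your separation-only reading does not give you this for free.

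Under your reading the missing step has a short proof, and it makes $\E_\T\subseteq\E_\G$ unnecessary.

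\begin{itemize}
\item For latents $z_i\neq z_k$, apply Assumption~\ref{assumption:tree_structure} to get observations $x_a,x_b$ in the same $z_k$-sub-tree but in different $z_i$-sub-trees. The $\T$-path between them runs through $z_i$ and stays inside that sub-tree, so it is the $z_k$-sub-tree containing $z_i$.
\item Since $\G$ is connected, there is a simple $\G$-path from $x_a$ to $x_b$. Separation at $z_i$ forces it through $z_i$. Being simple, it meets $z_k$ at most once, so one of its two halves joins $z_i$ to $x_a$ or $x_b$ within $\G\setminus\{z_k\}$.
\item This supplies $x'$ in Case~2, because the sub-tree containing $z_i$ is not $\{x\}$.
\item Applied to both $z_i$ and $z_j$, it supplies the two observations needed in Case~3. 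These lie in different $z_k$-sub-trees because $z_i$ and $z_j$ do. The extra edge then connects them in $\G\setminus\{z_k\}$, contradicting separation at $z_k$.
\end{itemize}
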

\vspace{-2ex}
\begin{proof}
% Consider two latent variables $z_i$ and $z_j$, $i\neq j$.
% We first prove that there must be a path from $z_i$ to $z_j$ in $\setG$.
% Because the partition of observations induced by $z_i$ is not identical to the one induced by $z_j$, there must exist two observed variables $x_a,x_b$ such that $x_a\cind x_b|z_i$ and $x_a\notcind x_b|z_j$. 
% From the first property, $z_i$ separates all paths from $x_a$ to $x_b$. 
% From the second property, $x_a$ and $x_b$ are in the same sub-tree when $z_j$ is removed. 
% Hence the path from $x_a$ to $x_b$, which includes $z_i$ is within the same sub-tree, such that there is a path from $z_j$ to $z_i$.

%
% The graph to keep in mind is:
%      x_a -- zi -- x_b         {in G}
%   AND
%      [x_a ...  x_b] --- z_j   {in G}
%
%  which means there must be a path z_j --- z_i    {in G}
%
We show that $\E_\G \subseteq \E_\T$; equality follows from the assumptions that $\T$ is a tree and $\G$ is connected. 
Let $(i,j)\notin\E_{\T}$.
Because $\T$ is a tree with $\X$ at its leaves, there must be a path from $z_i \leadsto z_j$ passing through at least one additional node $z_k$ ($k\not\in\{i,j\}$).  
By Assumption \ref{assumption:tree_structure} there exist distinct $x_a, x_b$ such that $x_a \cind x_b | z_i$ but $x_a\notcind x_b | z_k$.  
Similarly, there are $x_c, x_d$ separated by $z_j$, but not by $z_k$.
As $\G$ satisfies the nodewise conditional independencies of $\T$, there must be a path $x_a\leadsto x_b \in \E_\G$ that passes through $z_i$ but not $z_k$. 
Similarly, there is a path $x_c \leadsto x_d\in\E_\G$ through $z_j$ but not $z_k$.
Now, by construction, $x_a$ and $x_d$ belong to different sub-trees partitioned by $z_k$, and so $x_a \cind x_d | z_k$ in $\T$, from which it follows that $x_a \cind x_d | z_k$ also in $\G$.
But if $\G$ contained the edge $(i,j)$, there would be a path $x_a \leadsto z_i \to z_j \leadsto x_d$ which does not pass through $z_k$, violating this separation.
So  $(i,j)\notin\E_{\T} \Rightarrow (i,j)\notin\E_\G$ and $\E_\G \subseteq\E_\T$ as required.
%     x_a -- z_i -- x_b
%     [x_a .. z_i .. x_b] -- z_k
%                                   x_c -- z_j -- x_d
%                            z_k - [x_c .. z_j .. x_d]
%
% In G, the path x_c - x_d goes through z_j but not z_k.
% In G, the path x_a - x_b goes through z_i but not z_k.
% G models the fact that x_a \cind x_d given z_k -- this is because x_a and x_d are in different sub-trees of z_k.
% => In G, there are paths x_a - z_i; x_d - z_j, not through z_k.
% If we were to add an edge zi-z_j, there would be a path x_a -- x_d not through z_k, contradicting the \cind property at z_k.
%
%
\end{proof}

\subsubsection{Message passing}

% Inference  in graphical models involves finding the marginal posteriors (sometimes pairwise across an edge) of the latents given an instance of the observed variables; 
% % The \textit{inference} problem, then, is to compute all the marginal posteriors given a particular instantiation of the observed variables 
% that is, computing $p(z_k|\X)$ for $k=1,\dots, K$.

It is well known that exact inference is tractable for tree-structured graphs \citep{koller+friedman:2009}, and can be implemented efficiently by belief propagation \citep[BP;][]{pearl:1988} \textit{provided} that integrals over local beliefs can be computed exactly.
Defining potentials $\alphakj = p(\Xkj|z_k)$ at each latent, BP provides the recursive updates
\begin{align}\label{eq:bp-messages}
  \alphakj &=\!\!\intdx[dz_j] p(z_j | z_k) \prod_{l\in\Ne.j\backslash k} \alphakj.jl
   \quad\text{if  $j \in \Z$;} 
   &
   \alphakj &= p(x_j | z_k)
   \quad\text{if  $j \in \X$,}
\end{align}
where $p(z_j|z_k)$ and $p(x_j|z_k)$ are derived from the factors of \cref{eq:tree-factors}.
%
% Consider a graphical model with observed variables $\X$, latent variables $\Z$ (i.e., nodes $\setV =
% \X\cup\Z$) and edge set $\E = \Ezz \cup \Ezx$.  The general form of the joint is
% \begin{equation*}
%     p(z_{1:K},x_{1:P}) = \frac{1}{Z}\prod_k\phi_k(z_k)\prod_p\psi_p(x_p)
%     \prod_{(k,k')\in\Ezz}\Phi_{kk'}(z_k,z_{k'})
%     \prod_{(k,p)\in\Ezx}\Psi_{kp}(z_k,x_p) \,,
% \end{equation*}
% where $\phi_k$, $\psi_k$, $\Phi_{kk'}$ and $\Psi_{kp}$ are (typically 
% parametrised) non-negative factors and $Z$ is a normalising constant.  If the
% graph is tree-structured, it is possible to write the same distribution in terms
% of its normalised singleton and pairwise marginals,
% \begin{equation*}
%     p(z_{1:K},x_{1:P}) = \prod_kp(z_k)\prod_pp(x_p)
%     \prod_{(k,k')\in\Ezz}\frac{p(z_k,z_{k'})}{p(z_k)p(z_{k'})}
%     \prod_{(k,p)\in\Ezx}\frac{p(z_k,x_p)}{p(z_k)p(x_p)} \,,
% \end{equation*}
% or, defining an arbitrary $z_k$ as the root node of the tree, in terms of directed
% conditionals,
% \begin{equation*}
%   p(z_{1:K},x_{1:P}) = p(z_k)
%   \prod_{k'\neq k} p(z_{k'} | \text{pa}(z_{k'}))
%     \prod_{p} p(x_p | \text{pa}(x_p)) \,.
% \end{equation*}
%
% Belief propagation \citep[BP;][]{pearl:1988} is an efficient message-passing algorithm
% to compute various quantities in the graph: these include singleton and pairwise
% marginals, posterior marginals and the likelihood.  
%
To compute $p(z_k | \X)$, we follow  the ``inward'' path from each $x_p$ to $z_k$, computing $\alphakj.sr$ for each edge $(r,s)$ encountered.  Then
\begin{equation}\label{eq:bp-posterior}
  p(z_k | \X) = \frac1{L_k} p(z_k,\X) = \frac1{L_k} p(z_k) \prod_{j\in\Ne.k} \alphakj\,,
\end{equation}
where $L_k$ is easily computed by integration over the single variable $z_k$.
Furthermore, $L_k = p(\X)$ is the model likelihood, and so, provided messages are computed exactly, is the same at every $k$.  As will be seen, RAMP optimises all of these likelihoods together.  Fortunately, BP allows efficient computation of every nodewise likelihood: a single ``inward-outward'' pass of messages with respect to any one node in the tree (taking order $|\setV|$ steps) is sufficient to compute them all.

In practice, exact messages can only be computed for a limited class of generative conditionals.  More general (nonlinear) dependence between variables requires approximation \citep{wainwright+jordan:2008}, often combined with Monte-Carlo or numerical integration \citep[e.g.,][]{bbvi2014}.  

\section{RAMP}\label{sec:model_definition}

We are now ready to define RAMP.  Let $\X$ be a set of observed variables, and $\{\X\nn\}_{n = 1}^N$ an observed dataset, with a joint captured by a tree-structured model $\T = (\Z\cup\X, \E)$ satisfying Assumption~\ref{assumption:tree_structure}.  

\subsection{Amortised Message-Passing Network}

We define the RAMP model in terms of message-passing-based recognition, in the spirit of the RPM (\cref{eq:simple-rpm}). 
The latent variables $z_k$ in \cref{eq:tree-factors} can be transformed by an arbitrary bijection, and the relevant factors composed with the inverse, without altering the joint distribution.  Thus, without loss of generality, we fix marginal priors $\rampprior k$.
Let $\rampf kj$ be a recognition factor that defines a normalised message representing a belief over $z_k$ based on $\Xkj$.  The tree-based factorisation implies that this belief should depend on messages to $z_j$ in a 
manner analogous to \cref{eq:bp-messages},
\begin{equation}\label{eq:message_passing_belief}
  \rampf kj = \intdx[dz_j] p(z_k|z_j) \rampprior{j} \prod_{l\in\Ne.j\backslash k} \frac{\rampff jl(z_j | \Xkj.jl)}{\rampprior{j}}\,.
\end{equation}
\cref{eq:message_passing_belief} requires parametrisation of $p(z_k|z_j)$ (or of the factors in \cref{eq:tree-factors} from which it is derived), and depends on evaluation of the integral over $z_j$.  Thus, graph factors are often constrained to a simple tractable form.
In RAMP, we instead amortise the belief-to-belief transformation directly, defining
\begin{equation}\label{eq:ramp_message_functional}
    \rampf kj = \setG\msg.jk\left(\rampprior{j}\prod_{l\in\Ne.j\backslash k}\frac{\rampf jl}{\rampprior{j}}\right),
\end{equation}
in terms of a generic functional $\setG\msg.jk$ for latent-to-latent messages.
%(where $\G\msg.jk$ and $\G\msg.kj$ are distinct functionals, and not generally inverses).  
%
Note that the tree structure implies that $\Xkj = \bigcup_{l\in\Ne.j\backslash k}\Xkj.jl$, so that the distributions on the two sides of \cref{eq:ramp_message_functional} are conditioned on the same observations -- the functional $\setG$ transforms a belief on $z_{j}$ (given $\Xkj$) into a belief on $z_{j}$ (given $\Xkj$).
Finally, in the case of a singleton observation group, observation-to-latent messages are defined directly by $\rampfzx kp$ as in the RPM.

Thus, the functions $\{\rampfzx kp\}_{(k,p)\in\Ezx}$ and functionals $\{\G\msg.jk\}_{(j,k)\in\Ezz}$ together define a single amortised message-passing network which computes posterior beliefs over each $z_k$ given the observations.

\subsection{Nodewise RPMs}

The key to RAMP is to learn the amortised message-passing factors by simultaneously training multiple nodewise RPMs, one for each $z_{k}$.
These models take the form
\begin{equation}\label{eq:local_rpm_zk}
  \mathsf{P}_k(z_k,\setX) = \rampprior{k}\prod_{j\in\Ne.k}\frac{\rampf{k}{j}\rampemp{k}{j}}{\rampF{k}{j}},
\end{equation}
where $\rampprior{k}$ is the marginal prior on $z_{k}$, $\rampf{k}{j}$ are recognition factors defined by amortised message passing (\cref{eq:ramp_message_functional}), $\rampemp{k}{j}$ is the empirical distribution of $\setX^k_j$, and $\rampF{k}{j}$ are defined by %which result from integrating the recognition factor over the empirical distribution:
\begin{equation}\label{eq:rpm_normaliser_integral}
    \rampF{k}{j} = \intdx[d\setX^k_j]{\rampf{k}{j}\rampemp{k}{j}}
    = \frac{1}{N}\sum_{n=1}^{N}\rampfn{k}{j}{n}\,,
\end{equation}
as in the RPM. The second equality in \cref{eq:rpm_normaliser_integral} follows from the  atomic choice for $\rampempname$.

The nodewise RPM at (each) $z_k$ is a model of the observed data that respects the conditional independences induced by $z_k$. 
Because $z_k$ is latent, fitting the parameters of this model requires optimising a variational free energy (or ELBO).  As the nodewise RPMs have the canonical form introduced by \citet{RPM}, we can directly adopt the form of the free energy derived there:
\begin{equation}\label{eq:nodewise_F}
    \eff_k\nn\lr(){q\nn(z_k),\theta}
    = 
    \angles{\log \rampprior{k} + \sum_{j\in\Ne.k} \log \frac{\rampf kj}{\rampF kj}}_{q\nn}
    \hspace{-1em} + \entropy{q\nn}\,,
\end{equation}
where $\theta$ are the RAMP parameters, $q\nn$ is a variational distribution, and $\entropy{\cdot}$ is the entropy.

% We adopt the ``interior variational'' bound on this free energy (i.e. a lower bound on the lower bound; \citealt{RPM}).  Its form is given in \cref{supp:model}.  For simplicity, we use the symbol $\eff$ below without distinguishing between the two objectives.  

\subsection{Joint optimisation}

Even though the recognition factors of all the nodewise RPMs are defined by a single amortised message-passing network, optimisation of a single $\eff_k$ is not sufficient for accurate learning.  Most obviously, only factors $\G\msg.ij$ that lie along inward paths to $k$ appear in the definitions of $\rampff k{()}$.  More subtly, the $z_k$-nodewise RPM is defined in terms of $\rampemp{k}{j}$, and so does not by itself induce the additional tree-based condition that the different observations within $\Xkj$ be conditionally independent given the latents in $\Tkj$.
Thus, RAMP instead optimises the summed objective
\begin{equation}\label{eq:global_F}
    \eff = \frac{1}{K}\sum_k\eff_k = \frac{1}{KN}\sum_{k,n}\eff^{(n)}_k\,.
\end{equation}
This approach is justified by the following straightforward theorem.

\begin{theorem}\label{thm:functional-ramp}
  Let the distribution $p(\X)$ be compatible with a tree-structured latent model $\T$ satisfying Assumption \ref{assumption:tree_structure}, and let $\Z^* = \{z^*_k\}$ be minimal latent variables in $\T$ such that the beliefs $p(z^*_k | \Xkj)$ are each minimal sufficient statistics of $\Xkj$ for $\bigcup_{l\in\Ne.k\backslash j}\Xkj.kl$.
  Define an amortised message passing network on $\T$ according to \cref{eq:ramp_message_functional}.  
  Then, if $\{\rampprior{k}\}$,  $\{\rampfzx kp\}$ and $\{\G\msg.jk\}$ all optimise the objective of \cref{eq:global_F} in the limit $\rampemp{k}{j} = p(\Xkj)$, we have:
  \begin{enumerate}
    \item each $\eff_k$ attains its optimal value,
    \item the true beliefs $p(z^*_k|\Xkj)$ are each a function of $\rampf kj$.
  \end{enumerate}
\end{theorem}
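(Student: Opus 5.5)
The plan is to reduce the joint problem to the nodewise problems. First we identify the optimum of each $\eff_k$ on its own. Then we show that a single message-passing network attains all these optima simultaneously. Since $\eff=\frac1K\sum_k\eff_k$, any maximiser of $\eff$ must therefore maximise every $\eff_k$. The sufficiency claim then follows from the form of the nodewise optimum.

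\emph{Step 1: the nodewise optimum.} Maximise \cref{eq:nodewise_F} over $q\nn$, giving $q\nn(z_k)=\mathsf{P}_k(z_k|\X\nn)$. Then $\eff_k$ becomes the nodewise log-likelihood $\frac1N\sum_n\log\mathsf{P}_k(\X\nn)$. In the limit $\rampemp kj = p(\Xkj)$ this equals $-\mathrm{KL}(p(\X)\,\|\,\mathsf{P}_k(\X))$ up to the entropy of $p$.

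The family $\mathsf{P}_k$ consists of distributions in which the groups $\{\Xkj\}_j$ are conditionally independent given $z_k$, with marginals of each group fixed to $p(\Xkj)$. So the optimum is bounded above by $-\mathrm{KL}$ to the best such model. This bound is $0$, because $p(\X)$ is itself compatible with $\T$ and hence has exactly this conditional-independence structure at $z^*_k$. The bound is attained by $\rampprior k=p(z^*_k)$ (after fixing the marginal by a bijection) and $\rampf kj = p(z^*_k|\Xkj)$. Indeed, by Bayes' rule $\rampf kj/\rampF kj = p(\Xkj|z^*_k)/p(\Xkj)$ with $\rampF kj=p(z^*_k)$, so $\mathsf{P}_k(z_k,\X)=p(z^*_k,\X)$ exactly.

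\emph{Step 2: simultaneous attainability.} We must exhibit one network $\{\rampfzx kp\},\{\G\msg.jk\}$ whose messages equal $p(z^*_k|\Xkj)$ on every directed edge. Observation edges are set directly. For latent edges, the argument of $\G\msg.jk$ in \cref{eq:ramp_message_functional} is, by Bayes' rule and conditional independence of the $\Xkj.jl$ given $z^*_j$, proportional to $p(z^*_j|\Xkj)$. The map taking this belief to $p(z^*_k|\Xkj)$ is the exact BP update \cref{eq:message_passing_belief}, $\int p(z^*_k|z^*_j)\,p(z^*_j|\Xkj)\,dz_j$, since $z^*_k\cind\Xkj|z^*_j$ in $\T$. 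We take $\G\msg.jk$ to be this functional. All $\eff_k$ are then maximal at once, so the supremum of $\eff$ equals $\frac1K\sum_k\max\eff_k$. Any optimiser of \cref{eq:global_F} must therefore make every term optimal, which is claim~1.

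\emph{Step 3: sufficiency (claim 2), the main obstacle.} At any optimum, $\mathrm{KL}=0$ for each $k$. Hence $p(\X)=\int \rampprior k\prod_j\frac{\rampf kj\,p(\Xkj)}{\rampF kj}dz_k$, so under $\mathsf{P}_k$ the groups $\Xkj$ are conditionally independent given $z_k$. Moreover, $\mathsf{P}_k(z_k|\Xkj)=\rampf kj$, by the RPM normalisation property with empirical marginals equal to the truth.

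It follows that $\rampf kj$ is a sufficient statistic of $\Xkj$ for the remaining groups $\bigcup_{l\ne j}\Xkj.kl$. The reason is that the dependence between $\Xkj$ and the other groups factors through $z_k$, and $z_k$ depends on $\Xkj$ only through $\rampf kj$. By the assumed minimality of $p(z^*_k|\Xkj)$ as a sufficient statistic for the same quantity, it is a function of any sufficient statistic, in particular of $\rampf kj$.

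The delicate point is making this sufficiency argument rigorous. One must show that the conditional $p(\bigcup_{l\ne j}\Xkj.kl\,|\,\Xkj)$ depends on $\Xkj$ only through the belief $\rampf kj(\cdot)$, viewed as a function-valued statistic, and handle measure-zero and identifiability subtleties. Assumption~\ref{assumption:tree_structure} and Lemma~\ref{lemma:local_CIs_suff} are invoked to guarantee that distinct latents carry distinct partitions, so the nodewise sufficiency conditions are non-degenerate.
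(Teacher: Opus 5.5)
Your proposal is correct and follows essentially the paper's own argument, with more detail: an exact-BP choice of $\{\mathcal{G}_{j\to k}\}$ computes the true beliefs and so optimises every $\eff_k$ at once, hence any optimiser of the average optimises each term, and exactness of each nodewise RPM then makes $f^k_j(z_k|\X^k_j)$ sufficient, so minimality of $p(z^*_k|\X^k_j)$ gives claim 2. One minor slip does not affect the conclusion: in general $\mathsf{P}_k$ neither has group marginals $p(\X^k_j)$ nor posteriors $\mathsf{P}_k(z_k|\X^k_j)=f^k_j(z_k|\X^k_j)$. Rather, $\mathsf{P}_k(z_k|\X^k_j)\propto f^k_0(z_k)\,f^k_j(z_k|\X^k_j)/F^k_j(z_k)$, which equals $f^k_j$ only when $F^k_j=f^k_0$. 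Your upper bound, however, needs only $\mathrm{KL}\ge 0$, and this posterior is still a function of $f^k_j(\cdot|\X^k_j)$, so your sufficiency step goes through unchanged.
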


\begin{proof}
  As $p(\X)$ is compatible with $\T$, the set of possible amortised message-passing networks includes one that computes the true beliefs $p(z^*_k | \Xkj)$.  That is, there exist functions $\{\rampprior{k}\}$,  $\{\rampfzx kp\}$ and $\{\G\msg.jk\}$ defining accurate beliefs in a model that renders the sets $\{\Xkj\}_{j\in\Ne.k}$ conditionally independent given $z_k$ for all $k$.  Thus there is at least one solution that optimises all the $\eff_k$ and (1) follows.  As each nodewise RPM is then an accurate model of the joint over $\{\Xkj\}_{j\in\Ne.k}$, the amortised belief $\rampf kj$ is a sufficient statistic of $\Xkj$ for $\bigcup_{l\in\Ne.k\backslash j}\Xkj.kl$.  Claim (2) then follows as $p(z^*_k | \Xkj)$ is a minimal sufficient statistic by assumption, and so a function of any other sufficient statistic.
\end{proof}

\subsection{Exponential-family parametrisation}

To implement the amortised message passing of \cref{eq:ramp_message_functional} we must represent, multiply, and flexibly transform beliefs $\rampf kj$; and to optimise \cref{eq:nodewise_F}, we must compute integrals over $z_k$.
%,
A natural parametrisation that simplifies both steps is given by a tractable exponential family.
Specifically, we choose $\rampprior{k}$ and $\rampf{k}{j}$ at each $z_k$ to be instances of the same exponential family.
The choice of family may differ for different $z_k$.  Furthermore, the choice of an exponential family for $\rampf kj$ does not restrict the form of joint factor implied by the amortised $\G\msg.jk$.  Instead, the learnt mapping may correspond to an exponential-family projection of an arbitrary joint potential.  
In effect, this choice approximates the marginal posterior over $z_k$ with a tractable parametric form, while allowing the parameters of that posterior to depend on complicated learnt functional mappings from the observations.  Although Theorem \ref{thm:functional-ramp} no longer applies directly, parametric optimisation minimises the sum of nodewise Kullback-Leibler divergences from the optimal model of that result.

Exponential family distributions can be represented by natural parameters. 
In this representation, the product of distributional factors is easy to compute: the result is another distribution in the same family, with natural parameters equal to the sum of the factor natural parameters.

A remaining challenge is to handle the $\rampF kj$, which are mixtures of $\rampf kj$. 
We adopt the ``interior variational'' bound introduced by \citet{RPM}.  
This replaces $F^k_j$ with an exponential family approximation, adding terms to the free energy that ensure it remains a well-behaved lower bound.  Furthermore, based on the observation that $F^k_j \to f^k_0$ as $N\to\infty$ in a well-specified model, we define this bound using $f^k_0$ as the approximating functions.  Further details appear in \cref{supp:model}.

The effect of these assumptions is to make message propagation in RAMP particularly straightforward. If $\eta\msg.ij$ are the natural parameters of $\rampf{j}{i}$ then
\begin{equation}\label{eq:msg_jk_nat_params}
    \eta\msg.jk = g\msg.jk\lr[\Big](){\sum_{i\in\Ne.j\backslash k}\eta\msg.ij},
\end{equation}
where $g\msg.jk$ is a learned function (e.g., a neural network), mapping natural parameters to natural parameters.
Different latent variables may belong to different exponential families:
the only requirement is that all the $g\msg.ij$ functions for fixed $j$ return natural parameters of the family of $z_j$.

\section{Related Work}
There is a long history of methods for approximate inference \citep{sudderth++:cvpr2003,winn+bishop:variational,ihler+mcallester:2009:particle, lienart++:neurips2015} and learning in intractable models. 
Rather than an extensive review, we discuss key approaches that are most relevant to RAMP.

% We briefly review some relevant methods for approximate inference and learning in intractable models.

Black Box Variational Inference \citep[BBVI;][]{bbvi2014} optimises the parameters of a variational distribution by following a stochastic gradient of the free-energy function, employing the \textsc{reinforce} trick to compute Monte-Carlo gradients \citep{williams92reinforce}. 
This method requires the joint model to be specified explicitly, and the variational family is often fully factorised (``naive'' mean-field). 
In contrast, inference in RAMP more directly resembles belief propagation---the recognition factors learn to amortise a marginalisation operator of an implied factor. 
This makes our method better suited to discover structured or hierarchical latent representations (see \cref{subsec:linear_tree,subsec:nonlinear_tree_exp_details}).

Structured variational autoencoders \citep{SVAE,revisitingSVAE} combine nonlinear recognition factors with either tractable exact or mean-field variational message passing on a latent graph.  Similarly, standard applications of the RPM combine nonlinear recognition factors with tractable or variational inference between latents \citep{RPM,RPGSSM}. Again, RAMP combines these nonlinear recognition factors at the leaves with learnt amortised message passing to capture more complex relationships between latents. 

In some ways, the exponential-family approximation of messages in RAMP most closely resembles the framework of expectation propagation \citep[EP,][]{EPMinka2001}.  However, as with variational methods, EP depends on a parametrisation of generative model factors that are then approximated.  For flexibly defined potentials, these approximations may themselves require integrals that are analytically intractable, and numerical approximation introduces error and potential bias \citep{boyen+koller:1999,yu+al:2006:nsspw}. RAMP avoids these challenges by \emph{defining} the model in terms of the projected exponential family belief propagation, exploiting the RPM framework to ensure that the resulting models still have well-defined likelihoods.

Leveraging the ``nodewise'' conditional independences for learning latent representations has been explored in some contrastive methods, particularly in the context of temporal sequences \citep{infonce, eysenbachContrastiveRL}. 
Unlike contrastive methods, RAMP builds an explicit probabilistic model linking observations and latent variables, that can be estimated by (approximate) maximum likelihood. 
The probabilistic framework also makes our method directly applicable to graphical structures that are not necessarily temporal sequences (see \cref{subsec:linear_tree,subsec:nonlinear_tree_exp_details,subsec:pose}), where contrastive methods have been less explored, and are likely to require ad-hoc adjustments.

The idea of composing neural networks in a graph-defined structure has been explored extensively.
In the context of sequences, intuition about the forward-backward structure of inference has inspired Bidirectional RNN models \citep{biRNN,proteinBiRNN,biLSTM,biRNNsLM2015}. 
Applied to time series, RAMP can also be seen as a bidirectional RNN, where the hidden state has direct probabilistic and inferential semantics (see \cref{subsec:pendulum}). Beyond sequences, Graph Neural Networks \citep[for review, see][]{graphNNs_review} have been proposed for representation learning from richly structured data. However, these typically rely on a synchronised message-passing computation, limiting their applicability for inference problems \citep{gnns_hogwild2024}. Rather than iterating a global ``graph layer'', in RAMP each edge has its own neural network, and the message-passing flow is asynchronous, as in classic belief-propagation.

% \lior{Still missing:
% \begin{enumerate}
%     \item The SAVE paper(s) \citep{SVAE, revisitingSVAE}, the Deep Kalman Filters \citep{deepKalmanFilters}, and RPGSSM \cite{RPGSSM}
%     \item EP \citep{EPMinka2001}
% \end{enumerate}
% }

\section{Results}\label{sec:results}
\subsection{Hierarchical Models}\label{subsec:linear_tree}

We begin with a simple motivating example of hierarchical latents models, with
the graphical structure shown in
\cref{subfig:tree_structure}. We use this family of
problems to exemplify the key components of our method, as well as its
robustness to structural and distributional misspecification.

\paragraph{Linear Gaussian Trees}
Jointly Gaussian data were generated from a tree by sampling
a root latent from $\mathcal{N}(0,1)$, and children from $z|\text{pa}(z) \sim \mathcal{N}(a\cdot\text{pa}(z), \sigma^2)$. 
Leaf nodes were observed.
Joint Gaussianity means that exact inference is tractable (given the true parameters of the model).
Nonetheless, learning is still non-trivial due to the existence of latent
variables; moreover, the natural parameter transformations of message passing
are nonlinear, providing a nontrivial test of RAMP. Trained on this data, RAMP
recovered the exact solution with high fidelity.
(\cref{subfig:tree_linear_corrs}).

\paragraph{Structural Misspecification}
We next consider the scenario in which the graphical structure assumed in the
model does not perfectly match that of the true data generating process.
The main learning signals in RAMP are the nodewise free energies, which encourage
learned latents to capture information that is shared between the variables in
different associated subtrees (thereby rendering those subtrees conditionally
independent; \cref{thm:functional-ramp}). That
optimisation pressure should remain accurate at any node that implies a
partition on observations that is correct, and useful when violations relative to
the true structure are few, even if each subtree (internal) structure is misspecified.

To test this claim, we fit RAMP models based on randomly perturbed trees to 
data generated from a balanced base tree (see
\cref{subsec:structural_misspecification_details}).
As the labeling of latents is arbitrary, for analysis each latent in a
perturbed tree was matched to the best candidate in the original tree, by
minimising the partition mismatch metric. This metric counts how many
observations are routed to the wrong sub-tree. In general, the ability of RAMP
to recover the latent variable degraded as the partition mismatch increased.
However, latents which were structurally intact (partition mismatch 0)
\textit{within} a perturbed tree maintained an almost identical correlation
with the corresponding ground-truth latent as that obtained in the
well-specified tree
(\cref{subfig:tree_structural_misspecification}). 

% We generated data as before but used a perturbed tree to define the RAMP
% architectures. Specifically, we iteratively and randomly permuted sub-trees
% from the ground-truth starting point, using Nearest Neighbour Interchange
% (NNI). For each perturbed tree, we measured the global Robinson-Foulds (RF)
% metric to the ground-truth, as well as a per-latent mismatch metric that counts
% how many observations are routed to the wrong sub-tree, from the perspective of
% this latent.
%
% Correlation between inferred posterior means and true latents (averaged over
% all latents) gracefully degrades as the RF distance increases, but does not
% collapse. Moreover,  variables that are
% ``structurally intact'' \textit{within a perturbed tree} (partition mismatch 0)
% maintain an almost identical correlation with corresponding ground-truth latent
% as that obtained by either RAMP (or exact Gaussian inference) in the
% well-specified tree (\cref{subfig:tree_structural_misspecification}).

%

\paragraph{Distributional Misspecification}
Another form of model misspecification is distributional. In RAMP the (marginal) posteriors are constrained to be of an exponential family (here we use Gaussians), which in general would be an approximation. We tested the quality of this approximation by making the data generating process non-linear, thus making the true posteriors non-Gaussian. Here, RAMP outperformed Black Box methods applied to parametrised generative models (\cref{subfig:tree_distributional_misspecification}). 

Additional details and results for this section appear in \cref{subsec:linear_tree_exp_details,subsec:structural_misspecification_details,subsec:nonlinear_tree_exp_details,subsec:linear_tree_additional_results,subsec:nonlinear_tree_additional_results}.

\begin{figure}[t]
    \centering

    \begin{subfigure}{0.15\textwidth}
    \raisebox{0.8cm}[\height]{%
    \includegraphics[width=\columnwidth,page=4]{figures/tikz_figs/tree_conditional_independence.pdf}%
  }%
        \caption{}
        \label{subfig:tree_structure}
    \end{subfigure}%
    \begin{subfigure}{0.25\textwidth}
    \includegraphics[width=\columnwidth]{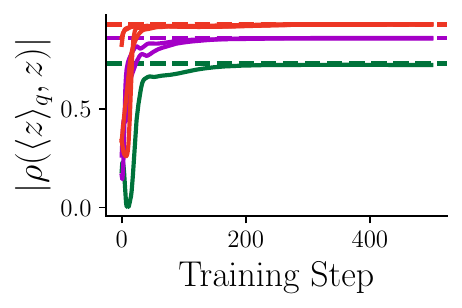}
        \caption{}
        \label{subfig:tree_linear_corrs}
    \end{subfigure}%
    \begin{subfigure}{0.25\textwidth}
      \includegraphics[width=\columnwidth]{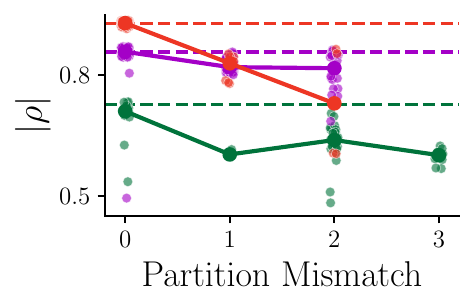}%
      \caption{}
      \label{subfig:tree_structural_misspecification}
    \end{subfigure}%
    \begin{subfigure}{0.35\textwidth}
      \includegraphics[width=\columnwidth]{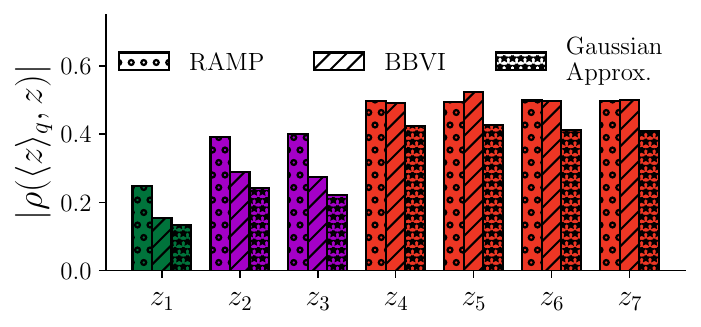}%
      \caption{}
      \label{subfig:tree_distributional_misspecification}
    \end{subfigure}

    \caption{
      (\subref{subfig:tree_structure}) Hierarchical structure with 3 levels of latent variables (color coded) arranged on a binary tree and observations on the leaves. 
      (\subref{subfig:tree_linear_corrs}) Under a linear-Gaussian data generating process, Pearson correlations between the posterior means inferred by RAMP and the ground truth latents (solid lines) reach the optimal bound achieved by exact inference with the true generative model (dashed lines).
      (\subref{subfig:tree_structural_misspecification}) Data as in (\subref{subfig:tree_linear_corrs}), but RAMP is trained using misspecified trees. The quality of representations learned at individual latents degrade with the amount by which they are perturbed (see main text). Crucially, latents that are structurally intact (partition mismatch 0) maintain their unperturbed performance (dashed lines), even though their sub-trees are misspecified. Solid lines trace the median level of correlations within each group of latents.
      (\subref{subfig:tree_distributional_misspecification}) Non-linear data generating process. RAMP learns more accurate representations compared to a black box method and a (supervised) Gaussian approximation, measured by the Spearman correlation of inferred posterior means and true latents.
    }
    \label{fig:hierarchical_models}
\end{figure}

\subsection{Nonlinear Dynamical Systems}\label{subsec:pendulum}
An important application of latent variable models is to time-series data, under the general assumption of a Markovian state-space model; 
the sequence of temporally correlated observations $x_{1:T}$ is assumed to be generated by a hidden Markov process, $z_{1:T}$, and an instantaneous emission process.
The graph thus comprises a chain of latent variables, each connected to a single (unique) observation.
Conditioning on the latent variable at $z_t$ should make $\lbrace x_{1:t-1}\rbrace$, $\lbrace x_t \rbrace$, and $\lbrace x_{t+1:T}\rbrace$ independent.
In other words, knowing the hidden state of the system at time $t$ decouples past, present, and future observations.
The corresponding message-passing process has a forward-backward structure: each node $z_t$ receives a message from the observation directly connected to it, $x_t$, and from $z_{t-1}$ and $z_{t+1}$.\footnote{The node at the first (last) time-step does not receive an incoming forward (backward) message.}
Time-series models often assume time invariance: i.e., that the transition probability $p(z_{t+1}|z_t)$, and the emission probability $p(x_t|z_t)$, do not depend on $t$.  In RAMP this implies that the parameters of the neural networks propagating ``forward'', ``backward'', and ``observation'' messages are tied:
\begin{equation*}
    g\msg.t{t+1} \equiv g_\text{fwd}, \quad
    g\bck.t{t+1} \equiv g_\text{bwd}, \quad
    g\msg.{x_t}t  \equiv g_\text{obs}.
\end{equation*}
With this, the amortised message-passing computation becomes a bidirectional RNN, processing the sequence of observation potentials (i.e., $g_\text{obs}(x_t)$). 
However, instead of arbitrary hidden-state activations, these RNNs communicate probabilistic beliefs over $z_t$ represented by natural parameters. %, which will often be of considerable lower dimensionality.

\begin{figure*}[t]
    \centering
    \includegraphics[width=0.8\textwidth]{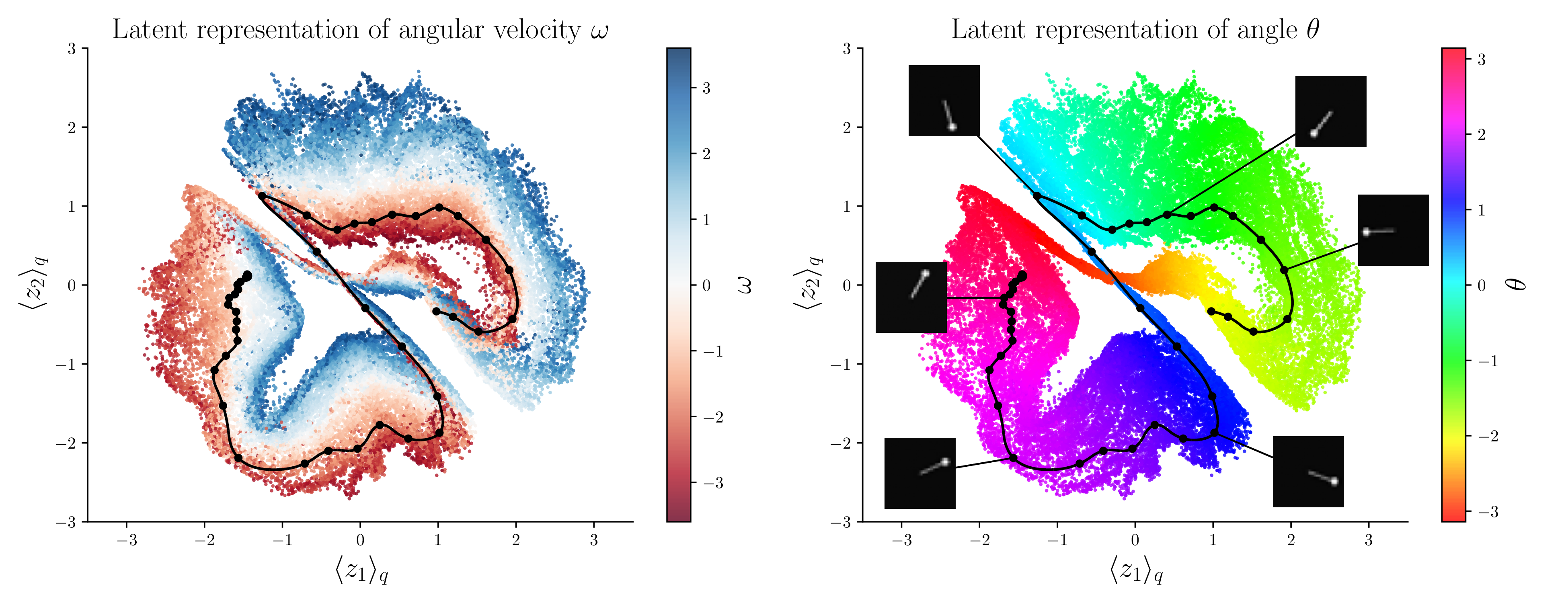}
    \caption{
    Inferred posterior means of learnt 2D latents for the pendulum dataset. Left: points coloured by ground-truth angular velocity. Right: points coloured by ground-truth angle. The black line tracks a single trajectory, with representative observations displayed. RAMP learns a latent representation of both dynamical state dimensions, reliably separating angles and angular velocities.
    % \lior{coloured scatters: all dataset frames, coloured by their corresponding true latent ($\omega$ / $\theta$), organised by the 2D posterior mean inferred by RAMP at this frame. Black dots: example of one consecutive (sub-) trajectory, black line: interplotation between the dots, for visualization only; images: corresponding observations}
    }
    \label{fig:pendulum}
\end{figure*}

We trained a RAMP model on  image observations generated by a nonlinear dynamical system.
We chose a simple physical pendulum with the dynamics $\dot{\theta}=\omega$; $\dot{\omega}=-\sin(\theta)$,
% \begin{equation*}
%     \dot{\theta}=\omega,\quad\quad\dot{\omega}=-\sin(\theta),
% \end{equation*}
where $\theta$ is the angle of the pendulum and $\omega$ its angular velocity. 
The observation $x(t)$ at each time was taken to be a rendered image of a pendulum at angle $\theta(t)$. 
\begin{wraptable}{r}{0.4\textwidth}
  \centering
  \caption{$R^2$ values (top: train; bottom: test) Kernel Ridge Regression from inferred latents to ground truth latents.}
  \label{tab:pendulum-baselines}
 \renewcommand{\arraystretch}{1.2}
 % \smaller
  \begin{tabular}{lccc}
    \toprule
     & $\cos \theta$ & $\sin \theta$ & $\omega$ \\
    \midrule
    RAMP & \makecell{0.9868\\0.9857} & \makecell{0.9434\\0.9459} & \makecell{0.7127\\0.6444} \\
    \midrule
    SVAE & \makecell{0.9983\\0.9908} & \makecell{0.9976\\0.9882} & \makecell{0.2021\\-0.1856} \\
    \midrule
    CPC  & \makecell{0.8131\\0.8047} & \makecell{0.1208\\0.1001} & \makecell{0.0396\\0.0097} \\
    \bottomrule
  \end{tabular}
\end{wraptable}
%$
Two sources of nonlinearity in this problem pose a challenge:
the mapping between angle and image, and the dynamical law of the hidden state. 
The first requires recognition factors $g_\text{obs}$ expressive enough to extract angle information from the image.
But angle alone is not enough to make past, future and present conditionally independent.  Evolution depends on the angular velocity, which cannot be read from a static frame.  Thus the model must learn to infer it by temporal integration.
This makes the ability to learn nonlinear transformations between latent variables essential. As the underlying system is nonlinear, linear state-space methods typically rely on a higher-dimensional latent embedding\citep{koopman:brunton_natcomm2018,koopman:review2021,koopman:KAE_nayak2025,RPGSSM,SVAE}. 
When limited to the true two-dimensional latent space, both Contrastive Predictive Coding \citep[CPC;][]{infonce} and SVAE failed to reliably recover angular velocity information.
In contrast, RAMP recovered the phase space of the system within a two-dimensional latent space (\cref{tab:pendulum-baselines}). This embedded phase-space can be directly visualised: \cref{fig:pendulum} shows posterior means inferred by a trained model for all frames in the dataset, coloured by the corresponding true angle and/or angular velocity.\footnote{The use of ground truth hidden state is for visualization alone; the model is trained only from image observations.}

\begin{figure}
    \centering

    \begin{tikzpicture}[every node/.style={inner sep=0pt}]

    \node (A) [left=1em] {%
    \raisebox{4ex}{\includegraphics[width=0.2\linewidth,trim=0 200 0 0,clip]{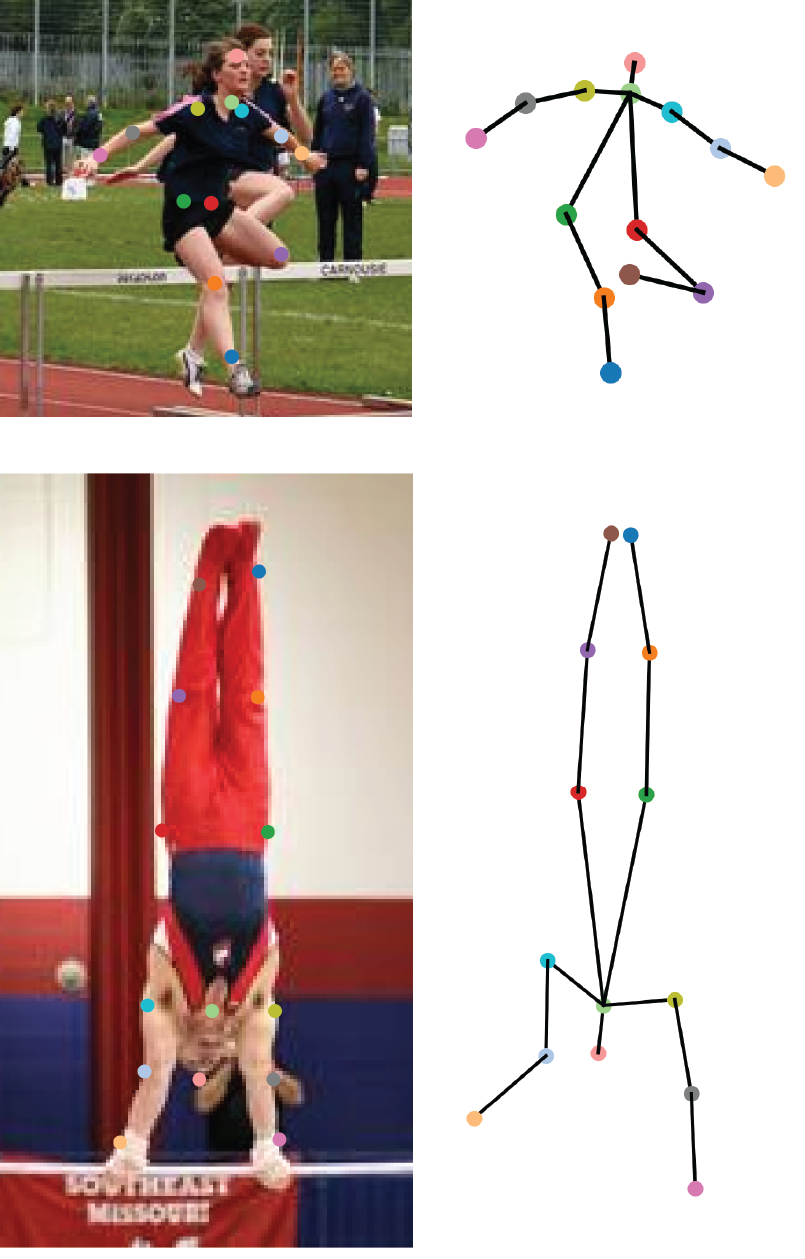}}
    \includegraphics[width=0.2\linewidth,trim=0 0 0 100,clip]{figures/Pose.png}%
    };
    \path[fit to node = A] (0,1) node[below right] (A label) {(a)};
    
    \node (B) [right=1em] {\includegraphics[width=0.4\linewidth]{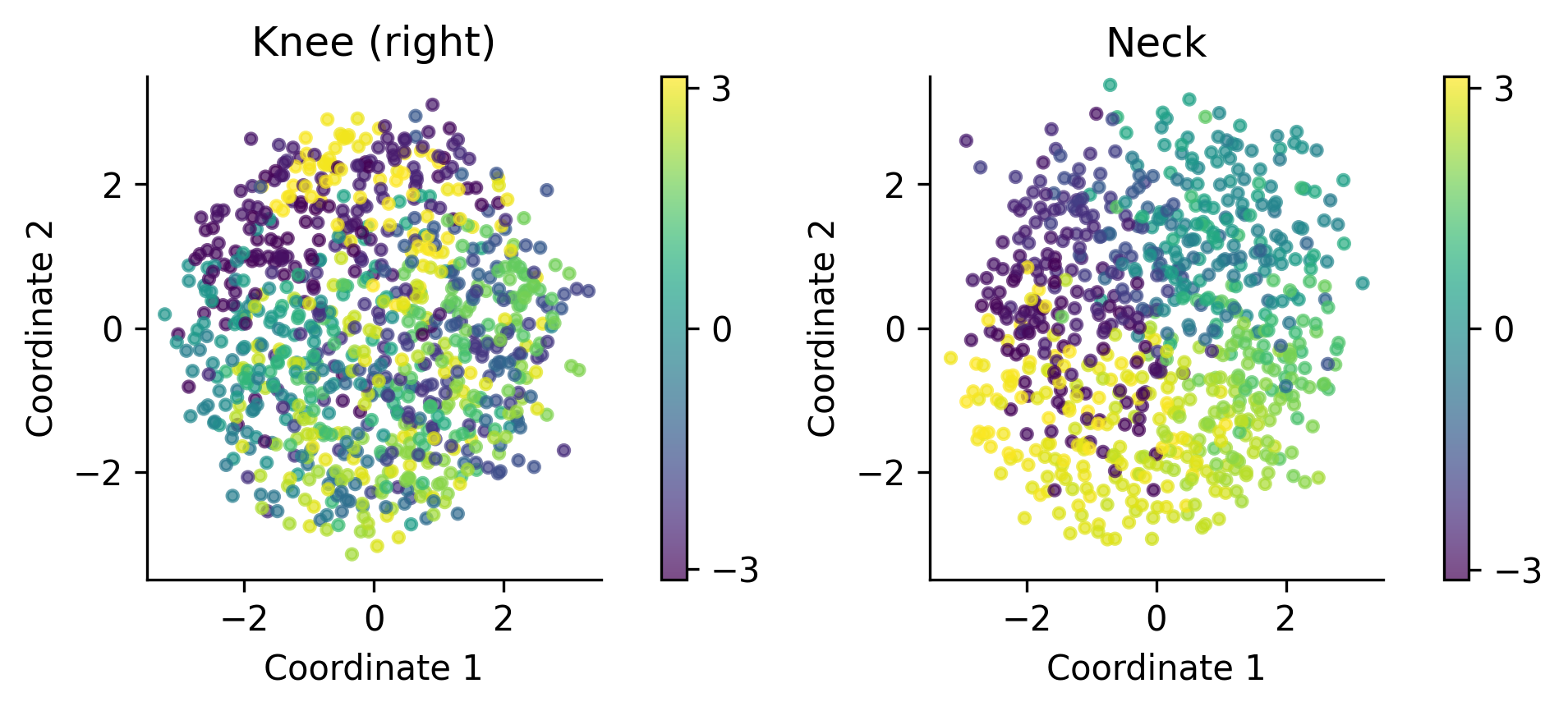}};
    \path[fit to node = B] (0,1 |- A label) node[right] {(b)};
    \end{tikzpicture}

    \caption{
    Pose reconstruction. 
    (a) Images from LSP (left) were used to create simplified stick-figure representations, and local patches extracted based on labelled joint positions (coloured circles). By regressing the inferred posterior means at each joint to edge orientations, a sketch of the underlying pose was reconstructed (right).
    (b) 2D projections of the 6D posterior means inferred at the right knee (left) and neck (right), coloured by the angle (in radians) of the edge connecting the joint to the right ankle (left) and right hip (right), respectively. Each dot represents a different image.
    }
    \label{fig:pose_recon}
    \label{fig:MDS}
\end{figure}

\subsection{Pose Estimation}\label{subsec:pose}

We next considered the problem of human pose estimation,
where dependencies are tree-structured in space rather than time.
The goal is to infer body configuration from real-world observations. Since the body forms a tree-like structure 
with joints connected by limbs, 
we used RAMP to model their dependencies and estimate pose.
The graph expresses the kinematic layout of the body. Each node represents a joint, with edges drawn between nodes whose joints are connected by limbs; for example, the left knee node has an edge to the left hip node and the left ankle node (see \cref{fig:pose_recon}a).
Each node is associated with its own local observation: a small 32 $\times$ 32 image patch centred on the associated joint. Thus, RAMP does not observe the entire image nor the joint locations, just their local appearances. The model must estimate pose by piecing together image fragments.

We extracted naturalistic poses from the Leeds Sports Pose (LSP) benchmark \citep{Johnson11}.
%, which contains images of sports people performing a wide range of activities.
Due to variation in body pose and viewpoint, joints are often occluded, making knowledge of body structure important for effective inference.
For simplicity, each image was converted to a 
stick-figure representation, with black lines drawn between joints on a white background.
Analysis of the latent variables inferred by RAMP revealed a strong correlation between the inferred posterior means at each joint and the orientations of the edges connecting that joint to its neighbours (\cref{supp:exp-results}). % Table \ref{tab:joint_r2}
This information, which is critical for pose estimation, ensures that the observation at each joint is independent of the observations at neighbouring joints; the information that is shared by a joint and its neighbours is the orientation of the edges that connect them. Edge orientation encoding was also observed in low-dimensional projections of the inferred posterior means at each joint, obtained by multidimensional scaling (Figure \ref{fig:MDS}b; additional examples in \cref{supp:exp-results}).
Based on the orientations encoded at each joint, we reconstructed a sketch of the underlying pose (Figure \ref{fig:pose_recon}a).  Despite the left ankle being occluded in the left image (modelled as a missing observation), RAMP uses the learned tree-structured prior to accurately infer the pose of the left lower leg.

\section{Discussion}

Statistical regularity is key to learning latent representations.
RAMP exploits this signal to learn a complex, non-linear latent tree.
It does so by constructing latents that each model a different correlation pattern in the data, but through beliefs that are coupled by a coherent inference process.

The tree structure of the underlying  graphical model, as assumed in this work, is essential both to ensure identifiability, and to guarantee the coherence of inference.
Nevertheless, belief propagation on \textit{loopy} graphs has been demonstrated to be effective empirically, and some convergence properties are known \citep{pearl:1988,murphy+weiss+jordan:loopyBP, yedidia+freeman+weiss:generalizedBP, wainwright:tree-reparam-loopy-graphs}.
Future work will explore the possibility of adapting RAMP to learn loopy graphical models.

\section{Acknowledgments}
This work was supported by the Gatsby Charitable Foundation (GAT3850, GAT4058), the Simons Foundation (SCGB543039) and the UK Engineering and Physical Sciences Research Council (EPSRC) grant EP/S021566/1 for the UCL Centre for Doctoral Training in Foundational Artificial Intelligence.

\bibliography{journals,sources}

\begin{thebibliography}{37}
\providecommand{\natexlab}[1]{#1}
\providecommand{\url}[1]{\texttt{#1}}
\expandafter\ifx\csname urlstyle\endcsname\relax
  \providecommand{\doi}[1]{doi: #1}\else
  \providecommand{\doi}{doi: \begingroup \urlstyle{rm}\Url}\fi

\bibitem[Arisoy et~al.(2015)Arisoy, Sethy, Ramabhadran, and Chen]{biRNNsLM2015}
Ebru Arisoy, Abhinav Sethy, Bhuvana Ramabhadran, and Stanley Chen.
\newblock Bidirectional recurrent neural network language models for automatic speech recognition.
\newblock In \emph{2015 IEEE International Conference on Acoustics, Speech and Signal Processing (ICASSP)}, pages 5421--5425. IEEE, 2015.

\bibitem[Baldi et~al.(1999)Baldi, Brunak, Frasconi, Soda, and Pollastri]{proteinBiRNN}
Pierre Baldi, Søren Brunak, Paolo Frasconi, Giovanni Soda, and Gianluca Pollastri.
\newblock Exploiting the past and the future in protein secondary structure prediction.
\newblock \emph{Bioinformatics}, 15\penalty0 (11):\penalty0 937--946, 11 1999.
\newblock ISSN 1367-4803.
\newblock \doi{10.1093/bioinformatics/15.11.937}.
\newblock URL \url{https://doi.org/10.1093/bioinformatics/15.11.937}.

\bibitem[Bogo et~al.(2016)Bogo, Kanazawa, Lassner, Gehler, Romero, and Black]{BogoSMPL}
Federica Bogo, Angjoo Kanazawa, Christoph Lassner, Peter Gehler, Javier Romero, and Michael~J. Black.
\newblock Keep it {SMPL}: Automatic estimation of 3{D} human pose and shape from a single image.
\newblock In Bastian Leibe, Jiri Matas, Nicu Sebe, and Max Welling, editors, \emph{Computer Vision -- ECCV 2016}, pages 561--578, Cham, 2016. Springer International Publishing.
\newblock ISBN 978-3-319-46454-1.

\bibitem[Boyen and Koller(1999)]{boyen+koller:1999}
Xavier Boyen and Daphne Koller.
\newblock Approximate learning of dynamic models.
\newblock In M.~S. Kearns, S.~A. Solla, and D.~A. Cohn, editors, \emph{Advances in Neural Information Processing Systems}, volume~11. MIT Press, 1999.

\bibitem[Brunton et~al.(2021)Brunton, Budi{\v{s}}i{\'c}, Kaiser, and Kutz]{koopman:review2021}
Steven~L Brunton, Marko Budi{\v{s}}i{\'c}, Eurika Kaiser, and J~Nathan Kutz.
\newblock Modern koopman theory for dynamical systems.
\newblock \emph{arXiv preprint arXiv:2102.12086}, 2021.

\bibitem[Dempster et~al.(1977)Dempster, Laird, and Rubin]{dempster+al:1977}
A.~P. Dempster, N.~M. Laird, and D.~B. Rubin.
\newblock Maximum {Likelihood} from {Incomplete} {Data} via the {EM} {Algorithm}.
\newblock \emph{Journal of the Royal Statistical Society. Series B (Methodological)}, 39\penalty0 (1):\penalty0 1--38, 1977.
\newblock ISSN 0035-9246.

\bibitem[Eysenbach et~al.(2022)Eysenbach, Zhang, Levine, and Salakhutdinov]{eysenbachContrastiveRL}
Benjamin Eysenbach, Tianjun Zhang, Sergey Levine, and Russ~R Salakhutdinov.
\newblock Contrastive learning as goal-conditioned reinforcement learning.
\newblock In S.~Koyejo, S.~Mohamed, A.~Agarwal, D.~Belgrave, K.~Cho, and A.~Oh, editors, \emph{Advances in Neural Information Processing Systems}, volume~35, pages 35603--35620. Curran Associates, Inc., 2022.
\newblock URL \url{https://proceedings.neurips.cc/paper_files/paper/2022/file/e7663e974c4ee7a2b475a4775201ce1f-Paper-Conference.pdf}.

\bibitem[Firth(1957)]{firth1957}
John Firth.
\newblock A synopsis of linguistic theory, 1930-1955.
\newblock \emph{Studies in linguistic analysis}, pages 10--32, 1957.

\bibitem[Graves and Schmidhuber(2005)]{biLSTM}
Alex Graves and Jürgen Schmidhuber.
\newblock Framewise phoneme classification with bidirectional lstm and other neural network architectures.
\newblock \emph{Neural Networks}, 18\penalty0 (5):\penalty0 602--610, 2005.
\newblock ISSN 0893-6080.
\newblock \doi{https://doi.org/10.1016/j.neunet.2005.06.042}.
\newblock URL \url{https://www.sciencedirect.com/science/article/pii/S0893608005001206}.
\newblock IJCNN 2005.

\bibitem[Hromadka et~al.(2025)Hromadka, Biegun, Fox, Heald, and Sahani]{RPGSSM}
Samo Hromadka, Kai Biegun, Lior Fox, James Heald, and Maneesh Sahani.
\newblock Maximum likelihood learning of latent dynamics without reconstruction, 2025.
\newblock URL \url{https://arxiv.org/abs/2505.23569}.

\bibitem[Ihler and McAllester(2009)]{ihler+mcallester:2009:particle}
Alexander Ihler and David McAllester.
\newblock Particle belief propagation.
\newblock In \emph{Proceedings of the Twelfth International Conference on Artificial Intelligence and Statistics (AISTATS)}, volume~5, pages 256--263. PMLR, 2009.

\bibitem[Johnson et~al.(2016)Johnson, Duvenaud, Wiltschko, Adams, and Datta]{SVAE}
Matthew~J Johnson, David~K Duvenaud, Alex Wiltschko, Ryan~P Adams, and Sandeep~R Datta.
\newblock Composing graphical models with neural networks for structured representations and fast inference.
\newblock In D.~Lee, M.~Sugiyama, U.~Luxburg, I.~Guyon, and R.~Garnett, editors, \emph{Advances in Neural Information Processing Systems}, volume~29. Curran Associates, Inc., 2016.
\newblock URL \url{https://proceedings.neurips.cc/paper_files/paper/2016/file/7d6044e95a16761171b130dcb476a43e-Paper.pdf}.

\bibitem[Johnson and Everingham(2011)]{Johnson11}
Sam Johnson and Mark Everingham.
\newblock Learning effective human pose estimation from inaccurate annotation.
\newblock In \emph{Proceedings of Computer Vision and Pattern Recognition (CVPR) 2011}, 2011.

\bibitem[Koller and Friedman(2009)]{koller+friedman:2009}
Daphne Koller and Nir Friedman.
\newblock \emph{Probabilistic Graphical Models: Principles and Techniques}.
\newblock MIT Press, Cambridge, MA, 2009.
\newblock ISBN 9780262013192.

\bibitem[Lienart et~al.(2015)Lienart, Teh, and Doucet]{lienart++:neurips2015}
Thibaut Lienart, Yee~Whye Teh, and Arnaud Doucet.
\newblock Expectation particle belief propagation.
\newblock In C.~Cortes, N.~Lawrence, D.~Lee, M.~Sugiyama, and R.~Garnett, editors, \emph{Advances in Neural Information Processing Systems}, volume~28. Curran Associates, Inc., 2015.
\newblock URL \url{https://proceedings.neurips.cc/paper_files/paper/2015/file/a00e5eb0973d24649a4a920fc53d9564-Paper.pdf}.

\bibitem[Lusch et~al.(2018)Lusch, Kutz, and Brunton]{koopman:brunton_natcomm2018}
Bethany Lusch, J~Nathan Kutz, and Steven~L Brunton.
\newblock Deep learning for universal linear embeddings of nonlinear dynamics.
\newblock \emph{Nature communications}, 9\penalty0 (1):\penalty0 4950, 2018.

\bibitem[Minka(2001)]{EPMinka2001}
Thomas~P. Minka.
\newblock Expectation propagation for approximate bayesian inference.
\newblock In \emph{UAI}, pages 362--369, 2001.

\bibitem[Moreno-Noguer(2017)]{Noguer}
Francesc Moreno-Noguer.
\newblock 3d human pose estimation from a single image via distance matrix regression.
\newblock In \emph{2017 IEEE Conference on Computer Vision and Pattern Recognition (CVPR)}, pages 1561--1570, 2017.
\newblock \doi{10.1109/CVPR.2017.170}.

\bibitem[Murphy et~al.(1999)Murphy, Weiss, and Jordan]{murphy+weiss+jordan:loopyBP}
Kevin~P. Murphy, Yair Weiss, and Michael~I. Jordan.
\newblock Loopy belief propagation for approximate inference: an empirical study.
\newblock In \emph{Proceedings of the Fifteenth Conference on Uncertainty in Artificial Intelligence}, UAI'99, page 467–475, San Francisco, CA, USA, 1999. Morgan Kaufmann Publishers Inc.
\newblock ISBN 1558606149.

\bibitem[Nayak et~al.(2025)Nayak, Chakrabarti, Kumar, Teixeira, and Goswami]{koopman:KAE_nayak2025}
Indranil Nayak, Ananda Chakrabarti, Mrinal Kumar, Fernando~L Teixeira, and Debdipta Goswami.
\newblock Temporally-consistent koopman autoencoders for forecasting dynamical systems.
\newblock \emph{Scientific Reports}, 15\penalty0 (1):\penalty0 22127, 2025.

\bibitem[Neal and Hinton(1998)]{neal+hinton:1998}
Radford~M. Neal and Geoffrey~E. Hinton.
\newblock A view of the {EM} algorithm that justifies incremental, sparse, and other variants.
\newblock In Michael~I. Jordan, editor, \emph{Learning in Graphical Models}, pages 355--370. Kluwer Academic Press, 1998.

\bibitem[Pearl(1988)]{pearl:1988}
Judea Pearl.
\newblock \emph{Probabilistic Reasoning in Intelligent Systems: Networks of Plausible Inference}.
\newblock Morgan Kaufmann Publishers Inc., San Francisco, CA, USA, 1988.
\newblock ISBN 1558604790.

\bibitem[Ranganath et~al.(2014)Ranganath, Gerrish, and Blei]{bbvi2014}
Rajesh Ranganath, Sean Gerrish, and David Blei.
\newblock Black box variational inference.
\newblock In \emph{Artificial intelligence and statistics}, pages 814--822. PMLR, 2014.

\bibitem[Sanchez-Lengeling et~al.(2021)Sanchez-Lengeling, Reif, Pearce, and Wiltschko]{graphNNs_review}
Benjamin Sanchez-Lengeling, Emily Reif, Adam Pearce, and Alexander~B Wiltschko.
\newblock A gentle introduction to graph neural networks.
\newblock \emph{Distill}, 6\penalty0 (9):\penalty0 e33, 2021.

\bibitem[Schuster and Paliwal(1997)]{biRNN}
M.~Schuster and K.K. Paliwal.
\newblock Bidirectional recurrent neural networks.
\newblock \emph{IEEE Transactions on Signal Processing}, 45\penalty0 (11):\penalty0 2673--2681, 1997.
\newblock \doi{10.1109/78.650093}.

\bibitem[Solodova et~al.(2024)Solodova, Richardson, Oktay, and Adams]{gnns_hogwild2024}
Olga Solodova, Nick Richardson, Deniz Oktay, and Ryan~P Adams.
\newblock Graph neural networks gone hogwild.
\newblock \emph{arXiv preprint arXiv:2407.00494}, 2024.

\bibitem[Sudderth et~al.(2003)Sudderth, Ihler, Freeman, and Willsky]{sudderth++:cvpr2003}
E.B. Sudderth, A.T. Ihler, W.T. Freeman, and A.S. Willsky.
\newblock Nonparametric belief propagation.
\newblock In \emph{2003 IEEE Computer Society Conference on Computer Vision and Pattern Recognition, 2003. Proceedings.}, volume~1, pages I--I, 2003.
\newblock \doi{10.1109/CVPR.2003.1211409}.

\bibitem[van~den Oord et~al.(2018)van~den Oord, Li, and Vinyals]{infonce}
A{\"a}ron van~den Oord, Yazhe Li, and Oriol Vinyals.
\newblock Representation learning with contrastive predictive coding.
\newblock \emph{ArXiv}, abs/1807.03748, 2018.

\bibitem[Wainwright and Jordan(2008)]{wainwright+jordan:2008}
Martin~J. Wainwright and Michael~I. Jordan.
\newblock Graphical {Models}, {Exponential} {Families}, and {Variational} {Inference}.
\newblock \emph{Foundations and Trends in Machine Learning}, 1\penalty0 (1--2):\penalty0 1--305, 2008.
\newblock ISSN 1935-8237, 1935-8245.

\bibitem[Wainwright et~al.(2001)Wainwright, Jaakkola, and Willsky]{wainwright:tree-reparam-loopy-graphs}
Martin~J Wainwright, Tommi Jaakkola, and Alan Willsky.
\newblock Tree-based reparameterization for approximate inference on loopy graphs.
\newblock In T.~Dietterich, S.~Becker, and Z.~Ghahramani, editors, \emph{Advances in Neural Information Processing Systems}, volume~14. MIT Press, 2001.
\newblock URL \url{https://proceedings.neurips.cc/paper_files/paper/2001/file/9185f3ec501c674c7c788464a36e7fb3-Paper.pdf}.

\bibitem[Walker et~al.(2023)Walker, Soulat, Yu, and Sahani]{RPM}
William~I. Walker, Hugo Soulat, Changmin Yu, and Maneesh Sahani.
\newblock Unsupervised representation learning with recognition-parametrised probabilistic models.
\newblock In Francisco Ruiz, Jennifer Dy, and Jan-Willem van~de Meent, editors, \emph{Proceedings of The 26th International Conference on Artificial Intelligence and Statistics}, volume 206 of \emph{Proceedings of Machine Learning Research}, pages 4209--4230. PMLR, 25--27 Apr 2023.
\newblock URL \url{https://proceedings.mlr.press/v206/walker23a.html}.

\bibitem[Wang et~al.(2014)Wang, Wang, Lin, Yuille, and Gao]{DBLP:conf/cvpr/WangWLYG14}
Chunyu Wang, Yizhou Wang, Zhouchen Lin, Alan~L. Yuille, and Wen Gao.
\newblock Robust estimation of 3{D} human poses from a single image.
\newblock In \emph{2014 {IEEE} Conference on Computer Vision and Pattern Recognition, {CVPR} 2014, Columbus, OH, USA, June 23-28, 2014}, pages 2369--2376. {IEEE} Computer Society, 2014.
\newblock \doi{10.1109/CVPR.2014.303}.
\newblock URL \url{https://doi.org/10.1109/CVPR.2014.303}.

\bibitem[Williams(1992)]{williams92reinforce}
Ronald~J Williams.
\newblock Simple statistical gradient-following algorithms for connectionist reinforcement learning.
\newblock \emph{Machine learning}, 8\penalty0 (3):\penalty0 229--256, 1992.

\bibitem[Winn and Bishop(2005)]{winn+bishop:variational}
John Winn and Christopher~M. Bishop.
\newblock Variational message passing.
\newblock \emph{Journal of Machine Learning Research}, 6\penalty0 (23):\penalty0 661--694, 2005.
\newblock URL \url{http://jmlr.org/papers/v6/winn05a.html}.

\bibitem[Yedidia et~al.(2000)Yedidia, Freeman, and Weiss]{yedidia+freeman+weiss:generalizedBP}
Jonathan~S Yedidia, William Freeman, and Yair Weiss.
\newblock Generalized belief propagation.
\newblock In T.~Leen, T.~Dietterich, and V.~Tresp, editors, \emph{Advances in Neural Information Processing Systems}, volume~13. MIT Press, 2000.
\newblock URL \url{https://proceedings.neurips.cc/paper_files/paper/2000/file/61b1fb3f59e28c67f3925f3c79be81a1-Paper.pdf}.

\bibitem[Yu et~al.(2006)Yu, Shenoy, and Sahani]{yu+al:2006:nsspw}
Byron~M. Yu, Krishna~V. Shenoy, and Maneesh Sahani.
\newblock Expectation propagation for inference in non-linear dynamical models with {P}oisson observations.
\newblock In \emph{Proceedings of the Nonlinear Statistical Signal Processing Workshop}. IEEE, 2006.

\bibitem[Zhao and Linderman(2023)]{revisitingSVAE}
Yixiu Zhao and Scott Linderman.
\newblock Revisiting structured variational autoencoders.
\newblock In Andreas Krause, Emma Brunskill, Kyunghyun Cho, Barbara Engelhardt, Sivan Sabato, and Jonathan Scarlett, editors, \emph{Proceedings of the 40th International Conference on Machine Learning}, volume 202 of \emph{Proceedings of Machine Learning Research}, pages 42046--42057. PMLR, 23--29 Jul 2023.
\newblock URL \url{https://proceedings.mlr.press/v202/zhao23c.html}.

\end{thebibliography}

\clearpage

\appendix
% \title{RAMP: Recognition parametrisation by Amortised Message Passing:\\
%   Supplementary Materials}
% \maketitle

\crefalias{section}{appendix}
% reset counters and add prefix
\setcounter{figure}{0}\renewcommand{\thefigure}{S\arabic{figure}}
\setcounter{table}{0}\renewcommand{\thetable}{S\arabic{table}}
\setcounter{equation}{0}\renewcommand{\theequation}{S\arabic{equation}}

\section{Model Definition and Free Energy}\label{supp:model}

We briefly review the RPM of \citet{RPM} and tree-based message passing, and then
provide additional details about the RAMP model definition, free energy, and
interior variational bound.  Finally, we describe the extension of RAMP to general factor trees.

\subsection{RPM Review}

Suppose we have $N$ observations of a collection of $P$ variables $\X = \{x_1
\dots x_P\}$.  Label each observed value $\X\nn = \{x_1\nn \dots x_P\nn\}$, and
the complete data set $\XN = \{\X\nn[1], \dots \X\nn[N]\}$.  We seek to learn a
model for these data in which the dependence amongst the observations in $\X$ is
captured by one or more latent variables $\Z = \{z_1 \dots z_K\}$.  In other words,
the model should render observations conditionally independent given the
latents, implying a joint distribution of the form
\begin{equation}\label{eq:simple-generative}
  p(\Z,\X) = p(\Z) \prod_p p(x_p | \Z)\,.
\end{equation}
The usual (directed) generative modelling approach is to parametrise each of
the factors in \cref{eq:simple-generative} and fit these parameters by, for example,
maximising the likelihood on the data $\XN$.

Now, by Bayes rule each conditional could instead have been written in the
recognition form $p(x_p | \Z) = p(\Z | x_p) p(x_p) / p(\Z)$. However, this is
not helpful for a fully parametric model: explicitly parametrising $p(x_p)$ may
limit the model marginals to simple distributions; and, moreover, the parameters
of the recognition conditionals $p(\Z | x_p)$ must be constrained to ensure
consistency with the marginals $p(x_p)$ and $p(\Z)$.

The insight of \citet{RPM} was to instead define a \emph{semiparametric} model,
taking $p(x_p)$ to be a nonparametric summary of the corresponding observed
distribution $\rpmemp$, and setting the normalising denominator in the Bayes
rule expression to be the corresponding mixture of recognition factors.
Following those authors' use of the symbol $f$ for the parametrised recognition
factors (which map from observations to parametric---typically exponential 
family---distributions on the latent) and $F$ for the normaliser, the RPM joint is
\begin{align}\label{eq:simple-rpm-appendix}
  p(\Z,\X) &= p(\Z) \prod_p \frac {\rpmf}{\rpmF} \rpmemp &
  &\text{with}&
  \rpmF &= \intdx[dx_p] \rpmf \rpmemp \,.
\end{align}
As $F$ is defined in terms of $f$ and \rpmempname, the only parametrised
distributions in this model are the latent prior $p(\Z)$ and recognition factors
$\{ \rpmf \}_{p=1}^P$; hence the name ``recognition-parametrised model''.

As \citet{RPM} point out, the RPM is well defined for many alternative
nonparametric choices of \rpmempname.  In practice, however, they---and
we---limit implementations to the atomic empirical measure $\rpmemp = \frac 1N
\sum_{n=1}^N \delta(x_p - x_p\nn)$.  In this case $F$ takes the form
\begin{equation}
  \rpmF = \frac 1N \sum_{n=1}^N \rpmfn pn \,.
\end{equation}

The RPM is a properly normalised model with a well-defined likelihood function
on the parameters.  This may be maximised for parameter estimation following the
standard development for latent variable models
\citep{dempster+al:1977,neal+hinton:1998}.  We discuss the variational
free energy and approximations necessary to handle the mixture terms $F$ in
\cref{sec:supp-free-energies}.

\subsection{Tree-Structured Models and Message Passing}

Consider a graphical model with observed variables $\X$, latent
variables $\Z$ (i.e., nodes $\setV = \X\cup\Z$) and edge set $\E = \Ezz \cup \Ezx$.  The general form of the joint is
\begin{equation*}
    p(z_{1:K},x_{1:P}) = \frac{1}{Z}\prod_k\phi_k(z_k)\prod_p\psi_p(x_p)
    \prod_{(k,k')\in\Ezz}\Phi_{kk'}(z_k,z_{k'})
    \prod_{(k,p)\in\Ezx}\Psi_{kp}(z_k,x_p) \,,
\end{equation*}
where $\phi_k$, $\psi_k$, $\Phi_{kk'}$ and $\Psi_{kp}$ are (typically 
parametrised) non-negative factors and $Z$ is a normalising constant.  If the
graph is tree-structured, it is possible to write the same distribution in terms
of its normalised singleton and pairwise marginals,
\begin{equation*}
    p(z_{1:K},x_{1:P}) = \prod_kp(z_k)\prod_pp(x_p)
    \prod_{(k,k')\in\Ezz}\frac{p(z_k,z_{k'})}{p(z_k)p(z_{k'})}
    \prod_{(k,p)\in\Ezx}\frac{p(z_k,x_p)}{p(z_k)p(x_p)} \,,
\end{equation*}
or, defining an arbitrary $z_k$ as the root node of the tree, in terms of directed
conditionals,
\begin{equation*}
  p(z_{1:K},x_{1:P}) = p(z_k)
  \prod_{k'\neq k} p(z_{k'} | \text{pa}(z_{k'}))
    \prod_{p} p(x_p | \text{pa}(x_p)) \,.
\end{equation*}

Belief propagation \citep[BP;][]{pearl:1988} is an efficient message-passing algorithm
to compute various quantities in the graph: these include singleton and pairwise
marginals, posterior marginals and the likelihood.  In particular, defining
potentials at each latent $\alphakj = p(\Xkj|z_j)$ (with $\Xkj$ representing the leaves of the sub-tree rooted at $x_j$ and not including $x_k$, as in \cref{sec:leveraging-ci}), BP provides the recursive updates
\begin{equation}\label{eq:bp-messages-appendix}
\begin{aligned}
  \alphakj &= \intdx[dz_j] p(z_j | z_k) \prod_{l\in\Ne.j\backslash k} p(\Xkj.jl | z_j) 
   = \intdx[dz_j] p(z_j | z_k) \prod_{l\in\Ne.j\backslash k} \alphakj.jl
   &&\text{if } v_j \in \Z
   \\
  \alphakj &= p(x_j | z_k) &&\text{if } v_j \in \X \,.
\end{aligned}
\end{equation}
To compute the marginal posterior $p(z_k | \X)$, we follow  every ``inward'' path from each $x_p$ to $z_k$, computing each message $\alphakj.sr$ for edges $(r,s) \in x_p \leadsto z_k$.  Then
\begin{equation}\label{eq:bp-posterior-appendix}
  p(z_k | \X) = \frac1{L_k} p(z_k,\X) = \frac1{L_k} p(z_k) \prod_{j\in\Ne.k} \alphakj\,,
\end{equation}
where $L_k$ is easily computed by integration over the single variable $z_k$.
Furthermore, $L_k = p(\X)$ and so corresponds to the likelihood of the model
parameters.  Important to the development of RAMP is that this likelihood does not
depend on the choice of node $k$, and so, as long as the messages $\alpha$ can
be computed exactly, the same computation can be performed at any latent node to
yield the same value.  In fact, for reasons discussed below, RAMP optimises all of these likelihoods together.  Fortunately, BP allows efficient computation of every nodewise likelihood: a single ``inward-outward'' pass of messages with respect to any one node in the tree (taking order $|\setV|$ steps) is sufficient to compute them all.

In practice, exact message computation is only possible for a limited class of generative conditionals.  More general (and nonlinear) dependence between variables in the model necessitates approximation \citep{wainwright+jordan:2008}, often combined with Monte-Carlo or numerical integration \citep[e.g.,][]{bbvi2014}.

\subsection{RAMP}

Although the RPM formulation of \citet{RPM} allowed for the model to comprise
multiple latents $\Z$ with their own conditional independence structure, the
recognition parametrisation applied only to the leaf node conditionals from
$x_p$ to (possibly a subset of) $\Z$.  The joint over variables in $\Z$ needed either to be tractable, or to be approximated by standard approaches from
the graphical models literature \citep[cf.][]{wainwright+jordan:2008}.

The key insight of RAMP is that the nodewise BP-derived models of \cref{eq:bp-posterior-appendix} can each be recognition parametrised in a coherent way by directly parametrising the message passing operations of \cref{eq:bp-messages-appendix} without explicit reference to the generative conditionals.
That is, we can parametrise the functional $\setG\msg.jk$ of \cref{eq:ramp_message_functional} (or---for exponential family forms---the natural parameter mapping $g\msg.jk$, as discussed further below),
% of \cref{eq:msg_gmk_natparam} -- no label or eq number as submitted, fix later.
in effect learning an amortised computation of the integrals needed for belief propagation.  
Just as the recognition factors (and prior) \emph{define} the RPM likelihood, these parametrised messages (and priors and leaf recognition factors) \emph{define} (an approximation to) each nodewise RAMP likelihood.

The RAMP model parameters are learnt by maximising a lower bound to the product of the nodewise likelihoods (\crefalt{eq:global_F}), even though in principle each evaluates the same joint probability, as shown above.   
This design is necessitated by the recognition parametrisation of the nodewise RPMs. 

Recognition factors in a single-latent RPM are optimised to infer a belief over the latent variable, capturing information that is shared amongst the different observations.
Thus, a model trained on a single $z_k$-nodewise likelihood would learn amortised messages within each subtree $\setT^k_j$ that carry information shared by variables in $\Xkj$ with variables in a \emph{different set} $\Xkj.kl$, but not necessarily the information shared amongst the variables \emph{within} $\Xkj$. 
This information content is a feature of the learning objective rather than of the topology of message passing.
To ensure the RAMP model captures information linking two variables $x_r,x_s \in \Xkj$ but not shared with any variables in $\Xkj.kl, l\neq j$, it is necessary to also optimise the nodewise RPM likelihood based at a latent node that lies along the direct path $x_r \leadsto x_s$.

In this way, each nodewise RPM likelihood is optimised to create conditional independence within the corresponding partition $\{\Xkj: j\in\Ne.k\}$.  As shown by Lemma~\ref{lemma:local_CIs_suff}, the collection of all these nodewise observation conditional independencies is sufficient to ensure consistency with the entire tree.

% Furthermore, unlike the RPM recognition factors which always define an implicit generative model, the distributional constraints imposed on the amortised message calculation will not generally be consistent with exact belief propagation for any true set of generative conditionals.
% %
% By optimising all the likelihoods together, RAMP favours amortised messages that are as internally consistent as possible.  

\subsection{Free Energy and Interior Bound}\label{sec:supp-free-energies}

The variational free energy (sometimes called the ELBO) is a lower bound on a model likelihood in terms of the parameters $\theta$ that define the model $p(\Z,\X)$, and a variational distribution over the latent variables, $q(\Z)$.  It has the form
\begin{equation}
  \eff(\theta, q) = \angles{\log p(\Z,\X)}_q + \entropy{q} \,,
\end{equation}
where angle brackets indicate expectation and $\entropy{\cdot}$ is the entropy.  The maximum of $\eff$ with respect to $q$ is achieved when $q(\Z) = p(\Z|\X)$ and equals the log-likelihood $\log p(\X)$.

Applied to the $z_k$-nodewise RPM of \cref{eq:local_rpm_zk}, and dropping terms independent of $\theta$ and $q$ we have:
\begin{align*}
    \eff_k\lr(){q(z_k),\theta} 
    &\eqconst 
    \angles{\log \rampprior{k} + \sum_{j\in\Ne.k} \lr[\Big](){ \log \rampf kj - \log \rampF kj}}_{q} + \entropy{q}\\
    &= - \KL{q}{\rampprior{k}} - \sum_{j\in\Ne.k}\KL{q}{\rampf kj} 
      + \sum_{j\in\Ne.k}\KL{q}{\rampF kj}\,,
\end{align*}
where $\KL\cdot\cdot$ is the Kullback-Leibler (KL) divergence.

Direct optimisation of this free energy is made difficult by the $\rampF kj$ factors in two ways.  First, they complicate the form of the optimal variational distribution $q$.  Second, even if $q$ is restricted to a tractable class, as is common in variational learning, exact computation of the final KL term remains challenging.

\DeclareDotOptCommand\tf{2}.{k}{j}{\tilde f^{#1}_{#2}(z_{#1})}
\DeclareDotOptCommand\hf{2}.{k}{j}{\hat f^{#1}_{#2}(z_{#1})}

While \citet{RPM} also investigated sampling and numerical approximation techniques for the KL computation, here we adopt a variant of those authors' ``interior variational bound''.  Introducing variational functions $\tf$ we have
\begin{align}
    \angles{\log \rampF kj}_q 
      &= \angles{\log \rampF kj \frac\tf {q(z_k)}}_q + \angles{\log\frac{q(z_k)}\tf}_q \nonumber
    \\
      &\leq \log\Gamma^k_j + \angles{\log\frac{q(z_k)}\tf}
    \qquad
      \text{with }\Gamma^k_j = \intdx[dz_k] \rampF kj\tf\,.
\end{align}
Thus we obtain a lower bound on the free energy
\begin{align}\label{eq:interior-F}
    \eff_k\lr(){q(z_k),\theta} 
    \ge \widetilde\eff_k\lr(){q(z_k),\theta,\{\tf\}} 
    &=
    \angles{\log \rampprior{k}}_q + \sum_{j\in\Ne.k} \lr(){ \angles[\bigg]{\log \rampf kj + \log \frac{\tf}{q(z_k)}}_{q} - \log \Gamma^k_j} + \entropy{q}\,.
\end{align}

Now, if $\tf$ is chosen to have the form of an unnormalised distribution in the same family as $\rampprior{k}$ and $\rampf kj$, then the distribution $q(z_k)$ which maximises $\tilde \eff_k$ will lie within that same tractable family, while $\Gamma^k_j$ becomes a straightforward sum of exponential family normalisers.  

The bound of \cref{eq:interior-F} will be tight when $\tf \propto q(z_k) / \rampF kj $.  Furthermore, as $\rampF kj$ is an average posterior, we expect its optimal form over learning to approach the prior $\rampprior{k}$.  Thus, rather than optimising $\tilde\eff_k$ with respect to parametric functions $\tf$ we take the approach of setting $\tf = q(z_k) / \rampprior{k} \equiv \hf$ after each computation of the optimal variational posteriors $q^*(z_k)$.  This choice yields the bound
\begin{align*}
    \eff_k\lr(){q(z_k),\theta} 
    &\ge \widetilde\eff_k\lr(){q(z_k),\theta,\{\hf\}} 
    = \angles{\log \rampprior{k}}_q + \sum_{j\in\Ne.k} \lr(){ \angles[\bigg]{\log \frac{\rampf kj}{\rampprior k}}_{q} - \log \widehat\Gamma^k_j} + \entropy{q}
\\
\text{with }
    \widehat\Gamma^k_j &= \intdx[dz_k] \rampF kj \hf \,,
\end{align*}
for which the optimal $q$ has the form
\begin{equation}\label{eq:q_star_zk}
    q^*(z_k) \propto \rampprior{k} \prod_{j\in\Ne.k} \frac{\rampf kj}{\rampprior k}\,.
\end{equation}

% \subsection{RAMP Amortised Messages}

% The choice of interior variational bound for learning also suggests a simplified definition of the amortised messages in RAMP.  
% %
% Recall that the general form of amortised message passing, introduced in the main text, can be written
% \begin{equation*}
%     f^k_j(\cdot|\setX^k_j) = \setG\msg.jk\left(\mathsf P_j(z_j|\setX^k_j) \right)
%     \tag{\ref{eq:ramp_message_functional}}\,.
% \end{equation*}
% Defined in this way, the same concerns would apply to the computation of $\mathsf P_j(z_j|\setX^k_j)$ as to $q$ above.  
% An alternative is to compute the amortised messages using a partial form of $q^*(z_j)$ defined by analogy to \cref{eq:partial-rpm-j}:
% \begin{equation}\label{eq:q_j_neg_k}
%     q_j^{\neg k}(z_j) \propto \rampprior{j} \prod_{l\in\Ne.j\backslash k} \frac{\rampf jl}{\rampprior j}\,.
% \end{equation}
% These local beliefs form an approximation to $\mathsf P_j(z_j|\setX^k_j)$ under the same assumptions as in the interior variational bound for learning.  Nonetheless, their use in amortised message passing for recognition is a distinct choice, albeit one that meshes well with the motivation of RAMP by aligning partial beliefs during  message passing with the nodewise beliefs used to optimise the RPM free energies.

\subsubsection{Exponential-Family Messages}
A practical choice, assumed throughout this paper, is for all the parametrised potentials over latent variable $z_k$ to lie in the same exponential family with sufficient statistic function $T^k$ and density
\[
  f(z_k) = e^{\eta\tr T^k(z_k) - \Phi^k(\eta)}
\]
with respect to a suitable base measure.  Here $\eta$ is the natural parameter and $\Phi^k$ is the log-normalising function.  The family (and therefore functions $T^k$ and $\Phi^k$) may differ for different $z_k$.

Now, multiplying the relevant factors (as in \crefalt{eq:q_star_zk}) reduces to summing their natural parameters $\eta$.
Note that in both \cref{eq:q_star_zk,eq:ramp_message_functional}, the recognition factors always appear divided by the corresponding prior.
In practice, we parametrise this ratio directly, letting $\eta^k_j$ represent the difference in natural parameters between the recognition factor $\rampf{k}{j}$ and the prior $\rampprior{k}$.
With this choice, \cref{eq:q_star_zk} becomes (c.f. \cref{eq:msg_jk_nat_params}):
\begin{align}
    \eta^*_k = \eta^k_0 + \sum_{j\in\Ne.k}\eta^k_j(\setX^k_j) = \eta^k_0 + \sum_{j\in\Ne.k}\eta\msg.jk
\end{align}

Working in natural parameter space also makes it easier to parametrise the functionals $\setG\msg.jk$---these should transform natural parameters of $z_j$ into natural parameters of $z_k$.

In our experiments, the prior terms $\rampprior{j}$ were fixed for each model. Therefore rather than including a constant $\eta^j_0$, we define $\setG\msg.jk$ by a parametric function $g\msg.jk$ such that (as in \cref{eq:msg_jk_nat_params}):
\begin{equation*}
    \eta\msg.jk=g\msg.jk\lr[\Bigg](){\sum_{l\in\Ne.j\backslash k} \eta\msg.lj}
\end{equation*}
The constant offset of $\eta^j_0$ is then implicit in the effective ``bias'' of the neural network defining $g\msg.jk$.

\subsection{Complete Algorithm}

The overall structure of RAMP learning is given by \cref{alg:RAMP}, along with pseudocode for the free energy computation in Listing \ref{alg:RAMP_free_energy}.
Learning follows the gradient of the free energy with respect to the parameters defining the amortised messages.\footnote{And potentially parameters of $\rampprior{k}$; although these were held to fixed values in our experiments without loss of generality.}  Messages are computed by inward-outward propagation along the edges of the tree defining the model.  These messages are sufficient to compute $q_k(z_k)$ at each node, and so evaluate the (interior variational bound to the) free energy.  

Efficient computation of the messages and free energy for exponential family beliefs requires an exponential family object that supports: \textbf{a)} Multiplicative combination (i.e., summing natural parameters); \textbf{b)} Computation of the KL divergence between two distributions from their natural parameters, $\KL{\eta_1}{\eta_2}$; \textbf{c)} Computation of the log-normaliser function, $\Phi(\eta)$; and \textbf{d)} automatic differentiation through all of the above methods.

\begin{algorithm}
    \caption{RAMP}
    \label{alg:RAMP}
    \begin{algorithmic}
        \STATE Initialise the parameters of the message-passing networks $\theta$
        \FOR{$t$ in $T$ training iterations}
            \STATE Sample a minibatch of observations $\mathbb{X}^{(t)}\subseteq\mathbb{X}$
            \FORALL{samples $\setX^{(n)}\in\mathbb{X}^{(t)}$}
                \STATE Compute messages for data sample $\setX^{(n)}$
            \ENDFOR
            \STATE Compute the free energy $\eff$ given the batch of messages (see Listing \ref{alg:RAMP_free_energy})
            \STATE Compute the gradient $\nabla_\theta\eff$ and update the parameters
            
        \ENDFOR
    \end{algorithmic}
\end{algorithm}

\lstset{style=mystyle}
\begin{figure}[t]
\begin{lstlisting}[language=Python, label=alg:RAMP_free_energy,caption={Pseudocode for the free energy calculation}]%%

def node_free_energy(prior, messages):
    # prior:    (unbatched)      natural parameters of f_0
    # messages: batch of [J x N] natural parameters of f_j/f_0
    # q:        batch of [N]     natural parameters of q
    # aux:      batch of [J x N] natural parameters of fhat
    
    f[j]   = prior + messages[j]
    q      = prior + messages.sum(axis=0)
    aux[j] = messages.sum(axis=0)

    gamma[j,n,m]  = Phi(aux[j,n]+f[j,m]) - Phi(aux[j,n]) - Phi(f[j,m])
    logGamma[j,n] = gamma[j,n,n] - logsumexp(gamma[j,n])
    
    F = -kl(q,prior) -kl(q,f) + logGamma
    return F

def free_energy(params, model, X):
    # X: batch of N samples of all P observations

    # compute all the messages (parallelized over batch), 
    # and get the marginal priors at each z_k:
    messages = model.apply(params, X)
    priors   = model.apply(params, method='prior')

    F[k] = node_free_energy(priors[k], messages_into_k)

    return F.mean()
    
\end{lstlisting}
\end{figure}
\FloatBarrier

\subsection{RAMP For General Factor Trees}\label{subsec:ramp_general_factor_graphs}

\cref{sec:model_definition} presented the form of RAMP for tree-structured graphical models with pairwise factors.
Here, we give a brief overview of how RAMP may be generalised to arbitrary tree-structured factor graphs.

A factor graph is an undirected bipartite graph $\setT=(\setV,\setU,\setE)$ where $\setV$ is a set of \textit{variable} nodes, $\setU$ is a set of \textit{factor} nodes, and $\setE$ is a set of edges $\setE\subseteq\setV\times\setU$. 
As before, $\setV$ is partitioned into observation nodes $\setX$ and latent nodes $\setZ$, and we assume that $\setT$ has a tree structure with observations on the leaves. The graphical structure defines a factorisation of the joint distribution (over latents and observations) according to:
\begin{equation*}
    p(\setX,\setZ) = \frac{1}{Z}\prod_{u\in\setU}\phi_u(\Ne.u),
\end{equation*}
where $\phi_u$ are factor potentials, $\Ne.u$ is the set of variable nodes connected to the factor $u$, and $Z$ is a normalisation constant. If $\operatorname{deg}(u)=k$, then the factor potential $\phi_u$ is a $k$-way function. An example is shown in \cref{fig:factor_graph}.

\begin{figure}[h]
    \centering
    \begin{tikzpicture}[
	var/.style={circle,draw,thick,minimum size=7mm,inner sep=0pt,font=\small},
	factor/.style={rectangle,draw,thick,minimum width=2mm,minimum height=2mm,fill=black,text=white,font=\small,align=center},
	fsmall/.style={rectangle,draw,thick,minimum width=2mm,minimum height=2mm,fill=black,text=white,font=\scriptsize},
	obs/.style={circle,draw,thick,fill=gray!30,minimum size=7mm,font=\small},
	edge/.style={thick}
	]
	
	% latent variables
	\node[var] (z2) at (0,0) {$z_2$};
	\node[var] (z3) at (1,1) {$z_3$};
	\node[var] (z4) at (1,-1) {$z_4$};
	
	\node[var] (z1) at (-2,0) {$z_1$};
	
	% pairwise / higher-order factors
	\node[factor,label=$u_1$] (f12) at (-1,0) {};
    \node[below=1ex of f12] {$\phi_1(z_1,z_2)$};
	\node[factor,label={[right=0.1cm]:$u_2$}] (f234) at (1,0) {};
    \node[right=1.5em of f234] {$\phi_2(z_2,z_3,z_4)$};
	
	% connect latent to group factors
	\draw[edge] (z1) -- (f12);
	\draw[edge] (z2) -- (f12);
	
	\draw[edge] (z2) -- (f234);
	\draw[edge] (z3) -- (f234);
	\draw[edge] (z4) -- (f234);
	
	% observations for z1
	\node[fsmall,above left=0.1cm and 0.1cm of z1] (a11) {};
	\node[obs, above left=0.1 cm and 0.1cm of a11]    (y11) {$x_3$};
	\draw[edge] (z1) -- (a11) -- (y11);
	
	\node[fsmall,below left=0.1cm and 0.1cm of z1] (a12) {};
	\node[obs, below left=0.1 cm and 0.1cm of a12]    (y12) {$x_4$};
	\draw[edge] (z1) -- (a12) -- (y12);
	
	% observations for z3
	\node[fsmall,above left=0.2cm of z3] (a31){};
	\node[obs,above left=0.1cm of a31]    (y31) {$x_2$};
	\draw[edge] (z3) -- (a31) -- (y31);
	
	\node[fsmall,above right=0.2cm of z3] (a32) {};
	\node[obs,above right=0.1cm of a32]    (y32) {$x_1$};
	\draw[edge] (z3) -- (a32) -- (y32);
	
	% observations for z4
	\node[fsmall,below left=0.2cm of z4] (a41) {};
	\node[obs,below left=0.1cm of a41]    (y41) {$x_5$};
	\draw[edge] (z4) -- (a41) -- (y41);
	
	\node[fsmall,below right=0.2cm of z4] (a42) {};
	\node[obs,below right=0.1cm of a42]    (y42) {$x_6$};
	\draw[edge] (z4) -- (a42) -- (y42);
	
\end{tikzpicture}
    \caption{A tree-structured factor graph with 7 pairwise factors and one 3-way factor. Two factor nodes are labelled with their corresponding potentials.}
    \label{fig:factor_graph}
\end{figure}

Belief propagation in factor graphs involves passing variable-to-factor and factor-to-variable messages.
The message from a variable node $v$ to one of its neighbouring factors $u\in\Ne.v$ is  given by the belief formed at $v$ by aggregating the messages from all the \textit{other} factors connected to it. Note that this message is a belief \emph{over the variable $v$}:
\begin{equation}\label{eq:m_v_to_u}
    m\msg.vu(v) \propto \prod_{u'\in\Ne.v\backslash u}m\msg.u'v(v).
\end{equation}
The message from a factor node $u$ to one of its neighbouring variables $v$ is the  belief over $v$ derived by marginalising the factor potential with respect to the messages to $u$ from the \emph{other} nodes connected to it:
\begin{equation}\label{eq:m_u_to_v}
    m\msg.uv(v) \propto \intop\lr[\bigg](){\prod_{v'\in\Ne.u\backslash v}\!\!dv'}\; \phi_u(\Ne.u)\prod_{v'\in\Ne.u\backslash v}m\msg.v'u(v').
\end{equation}

The factor-based RAMP model is based on amortisation of the factor-to-variable messages of \cref{eq:m_u_to_v}.
For a $k$-way factor $u$, these could be written generically as a $(k-1)$-way functional that transforms the collection of all incoming messages from variables besides $v$ into a belief over $v$ (c.f. \cref{eq:ramp_message_functional}).  The form of this mapping in a general factor graph is potentially arbitrary; in particular, it is not restricted to a multiplicative combination of the incoming messages. 
%
% Crucially, when parametrising these functionals by neural networks, interaction between the input believs should \textit{not} be restricted to be multiplicative.
%
In practice, for natural-parameter messages, the mapping may be implemented by a neural network acting on the concatenated incoming belief natural parameters to produce the output belief parameters.

To give a concrete example, consider the nodewise RPM at $z_2$ in the graph of \cref{fig:factor_graph}.
This RPM has two recognition factors, one for $\lbrace x_3,x_4\rbrace$, and the other for $\lbrace x_1,x_2,x_5,x_6\rbrace$, corresponding to the sets of all observation nodes downstream from each factor $z_2$ connects to (i.e., $u_1$ and $u_2$).
The recognition factors are then of the form:
\begin{align*}
    f^2_{u_1}(z_2|x_3,x_4) &= \mathcal G\msg.{u_1}{z_2}(\mathsf P_1(z_1|x_3,x_4)) \\
    f^2_{u_2}(z_2|x_1,x_2,x_5,x_6) &= \mathcal G\msg.{u_2}{z_2}(\mathsf P_3(z_3|x_1,x_2), \mathsf P_4(z_4|x_5,x_6) )
\end{align*}

% %
% The first recognition factor is effectively parametrised via an incoming message $g\msg.12$, as we have seen in the case of pairwise factor graphs.
% %
% The second recognition factor 

% The second recognition factor should depend on the beliefs formed at $z_3$ (given $x_2,x_1$) and $z_4$ (given $x_5,x_6$).
% %
% However, these do no form independent recognition factors for $z_2$, because $z_4$ and $z_3$ are \textit{not} conditionally independent given $z_2$.
% %
% Rather, the incoming message into $z_2$ should be given by marginalising the factor $\phi_{234}$ (linking $z_2,z_3$, and $z_4$) over the beliefs at $z_3$ and $z_4$.

% To express this in RAMP, we associate with each $k$-way factor $k$ neural networks that amortise the factor-to-variable messages.
% %
% Each of these networks now take $k-1$ input beliefs and generates a belief over the remaining variable.
% %
% Crucially, interactions between the $k-1$ incoming beliefs should not be restricted to be multiplicative.

\section{Details of Experiments}\label{supp:exp-details}

\subsection{Gaussian Tree}\label{subsec:linear_tree_exp_details}

The hierarchical linear-Gaussian model is defined by the following sequential sampling process:
\begin{align*}
    z_1 &\sim \mathcal{N}(0,1)\\
    v|\text{pa}(v)   &\sim \mathcal{N}(a_v\cdot\text{pa}(v), \sigma_v^2),
\end{align*}
where $z_1$ is the node at the root of the tree, $v$ is any other node in the tree (either a leaf $x_i$ or an internal node $z_i$) and $\text{pa}(v)$ is the parent of $v$.

This model definition holds for general trees.
In our experiments, we used a complete binary tree of depth 4 (including the root). 
We take the leaves to be observations and the rest of the nodes to be latent variables, yielding $8$ observations and $7$ latents, as follows:

\begin{figure}[h!]
    \centering
    \begin{tikzpicture}[level 1/.style={level distance=8mm, sibling distance=40mm},
    					level 2/.style={level distance=8mm, sibling distance=20mm},
    					level 3/.style={level distance=10mm, sibling distance=10mm},
    					edge from parent/.style={->,draw},
                        lat/.style={},
                        obs/.style={fill=gray!30}
    					]
    	\node[circle,draw,lat] {$z_1$}
    		child {node[circle,draw,lat] {$z_2$}
    			child {node[circle,draw,lat] {$z_4$}
    				child {node[circle,draw,obs] {$x_1$}}
    				child {node[circle,draw,obs] {$x_2$}}
    			}
    			child {node[circle,draw,lat] {$z_5$}
    				child {node[circle,draw,obs] {$x_3$}}
    				child {node[circle,draw,obs] {$x_4$}}
    			}
    		}
    		child {node[circle,draw,lat] {$z_3$}
    			child {node[circle,draw,lat] {$z_6$}
    				child {node[circle,draw,obs] {$x_5$}}
    				child {node[circle,draw,obs] {$x_6$}}
    			}	
    			child {node[circle,draw,lat] {$z_7$}
    				child {node[circle,draw,obs] {$x_7$}}
    				child {node[circle,draw,obs] {$x_8$}}
    			}
    		}
    		;
    \end{tikzpicture}
    \caption{The tree structure of for the hierarchical linear-Gaussian model of \cref{subsec:linear_tree}}
    \label{fig:binary_tree}
\end{figure}
We take the conditional factors to be spatially uniform across the tree, with $a_v\equiv a = 1$ and $\sigma^2_v\equiv\sigma^2=1$.

The linear-Gaussian links imply that the full joint distribution $p(\mathbf{z},\mathbf{x})$ is multivariate Gaussian. 
This multivariate distribution has mean zero and a covariance matrix with a recursive structure, reflecting that of the tree (\cref{fig:binary_tree_covariance_and_precision}).

\begin{figure}[h]
    \centering
    \includegraphics[width=0.35\linewidth]{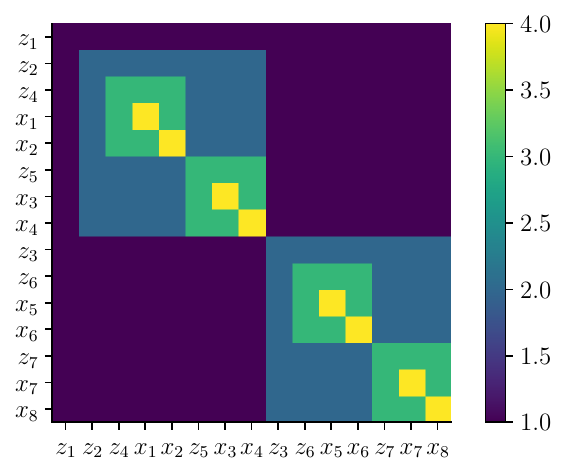}\hspace{2em}%
    \includegraphics[width=0.35\linewidth]{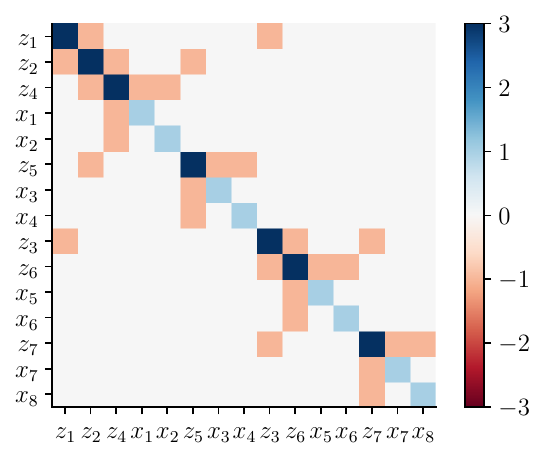}
    \caption{The covariance (left) and precision (right) matrices of the hierarchical linear-Gaussian model of \cref{subsec:linear_tree} (with $a=1$ and $\sigma^2=1$). Non-zero off-diagonal elements in the precision matrix correspond to edges in the graphical model.}
    \label{fig:binary_tree_covariance_and_precision}
\end{figure}

By choosing an appropriate permutation of the variables, the covariance and precision matrices can be written in block form:
\begin{equation*}
    \mathbf{L} = \left[\begin{matrix}
                    \mathbf{L}_\mathbf{zz} & \mathbf{L}_\mathbf{zx} \\
                    \mathbf{L}_\mathbf{xz} & \mathbf{L}_\mathbf{xx}
                \end{matrix}
              \right],
    \quad\quad
    \mathbf{\Lambda} = \mathbf{L}^{-1} = \left[\begin{matrix}
                    \mathbf{\Lambda}_\mathbf{zz} & \mathbf{\Lambda}_\mathbf{zx} \\
                    \mathbf{\Lambda}_\mathbf{xz} & \mathbf{\Lambda}_\mathbf{xx}
                \end{matrix}
              \right],
\end{equation*}
which, using the rules of Gaussian conditioning, give a simple expression for the posterior $p(\mathbf{z}|\mathbf{x})$. 
This posterior is also a multivariate Gaussian, with the statistics (in terms of mean and covariance):
\begin{equation*}
    p(\mathbf{z}|\mathbf{x}) = \mathcal{N}\left(\mathbf{z};\; -\mathbf{\Lambda}_\text{zz}^{-1}\mathbf{\Lambda}_\text{zx}\mathbf{x} , \mathbf{\Lambda}_\text{zz}^{-1}  \right)
\end{equation*}

Naturally, this posterior depends on the prior over $\mathbf{z}$. 
However because these are all latent variables, they can be rescaled arbitrarily, while compensating by the likelihood terms.
From the perspective of RAMP, we have kept the marginal priors ($\rampprior{k}$) of all latent variables fixed at a standard Gaussian with zero mean and unit variance.
Therefore, in order to quantitatively compare the RAMP results with those of the analytic theory, we computed a re-scaled version of the matrix $\mathbf{L}$ (and $\mathbf{\Lambda}$), reflecting the same parametrisation.
For that, define the scaling matrix $\mathbf{S}$ to be the diagonal matrix with:
\begin{equation*}
    S_{ii} = \begin{cases}
        \frac{1}{\sqrt{L_{ii}}} & i\in\mathbf{z} \\
        1                       & i\in\mathbf{x}
    \end{cases},
\end{equation*}
and set
\begin{equation*}
    \widetilde{\mathbf{L}} = \mathbf{SLS^\top}
\end{equation*}
Respectively we define $\widetilde{\mathbf{\Lambda}}=\widetilde{\mathbf{L}}^{-1}$ to get the re-scaled posterior:
\begin{equation}\label{eq:linear_tree_exact_posterior}
    \widetilde{p}(\mathbf{z}|\mathbf{x}) = \mathcal{N}\left(\mathbf{z};\; -\mathbf{\widetilde\Lambda}_\text{zz}^{-1}\mathbf{\widetilde\Lambda}_\text{zx}\mathbf{x} , \mathbf{\widetilde\Lambda}_\text{zz}^{-1}  \right)
\end{equation}

The exact posterior (in either parametrisation) has a non-diagonal covariance matrix, reflecting posterior correlations between different latent variables.
RAMP only gives explicit access to a set of marginal posteriors, with the interactions between different latents being implicitly encoded in the message-passing factors. 
To demonstrate that these interactions are learned by RAMP, we also compared its results to those achieved by the ``mean-field'' variational posterior for the true generative model, i.e. the fully factored distribution $q(\mathbf{z})=\prod_iq(z_i)$ that minimises $\KL{q(\mathbf{z})}{\widetilde{p}(\mathbf{z} | \mathbf{x})}$.
This mean-field solution can be computed analytically for multivariate Gaussians.
It recovers the correct posterior mean, but (generically) underestimates the marginal variances:
\begin{equation}\label{eq:linear_tree_mf_posterior}
    \widetilde{p}_\text{mf}(\mathbf{z}|\mathbf{x}) = \mathcal{N}\left(\mathbf{z};\; -\mathbf{\widetilde\Lambda}_\text{zz}^{-1}\mathbf{\widetilde\Lambda}_\text{zx}\mathbf{x} , 
    \operatorname{diag(\mathbf{\widetilde\Lambda}_{zz})}^{-1}
    \right)
\end{equation}

\Cref{fig:hierarchical_models} shows the correlation between the inferred posterior means and the true latents, for each one of the models: RAMP, variational mean-field, and exact inference.
The mean-field and exact inference models predict constant marginal variances (independent of the observation). While this constraint could be imposed in RAMP, we have kept the parametrisation general, and so for the comparison of marginal variances we have averaged the posterior variances predicted by RAMP, per latent variable, over the dataset.

Hyperparameter settings for the experiment can be found in \cref{tab:experimental_setup_trees}.

\begin{table}[h!]
\label{tab:experimental_setup}
\begin{center}
\begin{footnotesize}
\begin{tabular}{lcccccccc}
\toprule
Dataset & $N$ & \multicolumn{2}{c}{Message-passing networks} & Prior & LR & Batch & Train Iters. & Optim. \\
&& Architecture & Nonlinearity & & & & & \\
\midrule
Linear tree & 10,000 & [32,32] & tanh & $\mathcal{N}(0,1)$ & 0.001 & $N$ & 500 & Adam \\
Nonlinear tree       & 10,000 & [64,64] & tanh & $\mathcal{N}(0,1)$ & 0.001 & $N$ & 500 & Adam\\
\bottomrule
\end{tabular}
\end{footnotesize}
\end{center}
\caption{Hyperparameters for the linear and nonlinear tree experiments (\cref{subsec:linear_tree,subsec:nonlinear_tree_exp_details}}
\label{tab:experimental_setup_trees}
\end{table}

\subsubsection{Black Box Variational Inference}

%Comparing RAMP with the results of exact inference 

In addition to comparing RAMP results to exact inference (which requires knowledge of the true generative model), we also compared to a Black-Box Variational Inference (BBVI) baseline \citep{bbvi2014}.
BBVI enables (variational) inference in analytically intractable models, by following a stochastic, samples-based, estimate of the gradient of the free energy.
The method requires a choice of the variational posterior family that is easy to sample from, as well as the ability to evaluate the log joint-probability over latent and observed variables.

The original method focused on inference in a known generative model.
However, it can naturally be extended to learning model parameters, either by EM, or by jointly optimising variational \emph{and} model parameters by gradient ascent on the free energy.
Here we take the latter approach which more closely resembles learning in RAMP.
We defined the model parametrisation to follow the graphical structure of the true generative model (\cref{fig:binary_tree}), but with conditional factors defined as
\begin{equation*}
    v = f_v(\text{pa}(v)) + \xi_v,
\end{equation*}
where $f_v$ is a neural-network function and $\xi_v$ is independent Gaussian noise with unit variance.
Each $f_v$ was parametrised as a (separate) neural network with one hidden layer of 16 units.
The variational distribution was taken to be a fully-factorised Gaussian, $q(\mathbf{z})=\prod_{i=1}^7q_i(z_i)$ with each $q_i$ a univariate Gaussian.

These choices were made as to resemble the ``structural'' prior of RAMP: that the joint factorises according to the tree structure, that no specific functional/distributional form of the factors themselves is assumed \textit{a priori}, and that the marginal posteriors should be (approximated as) Gaussian distributions.

BBVI was trained with full-batch iterations. We found that BBVI was slower to train compared to RAMP. To reach convergence, we trained for 2,500 training iterations (\cref{fig:bbvi_learning_curve_linear_tree}). For each training iteration and for each example, $100$ samples from the (current) variational posterior were used to form a Monte-Carlo estimate of the free energy.

The same setup was used for the nonlinear tree experiment, with 3,000 training iterations.

%Additionally, we trained a Black-Box Variational Inference (BBVI) algorithm on the Gaussian tree dataset (see Section \ref{subsec:linear_tree_additional_results}). The model comprised a variational distribution as well as a joint generative distribution. The variational component consisted of fully factorised Gaussians, with learnable parameters for the mean and variance based on which we analytically computed the entropy term of the free energy. Additionally, at each iteration, 100 Monte-Carlo samples were extracted to evaluate the expected joint. The joint generative distribution was decomposed according to the tree-defined factorisation, with each term specified using a neural network (architecture [16], non-linearity relu, LR 0.001, batch $N$, Optim Adam) that admitted as input a variational sample and outputted a mean. These means, along with a constant variance of 1, defined the parameters of the conditional Gaussian distributions implied by the generative model. Note that a standard Gaussian was used for the root $z_1$. 

\subsection{Structural Misspecification with Perturbed Trees}\label{subsec:structural_misspecification_details}
To create perturbed trees, we started from the true tree
(\cref{fig:binary_tree}), and applied a random
sequence of $k$ Nearest Neighbour Interchange (NNI) steps. We repeated this
procedure for $k=0,\dots,4$, with $10$ random seeds for each value of $k>0$ (note that $k=0$ is, by
construction, the true unperturbed tree). An illustration of the procedure with $k=2$ is depicted in \cref{fig:nni-procedure}.

Because latents are only defined relative to the tree structure defined on the
observations (consider, for example, a perturbation that would switch $z_{4}$
with $z_{5}$ -- clearly the `perturbed` tree is identical to the original one,
up to relabeling of the latents), in each perturbed tree we have
matched each latent to its best match in the original tree. For a given latent
$z'_{i}$ in the perturbed tree, we have computed for each latent $z_{j}$ in the
true tree the partition mismatch number $m(i,j)$ that counts how many
observations are routed, in the perturbed tree, to the wrong sub-tree at
$z'_{i}$, relative to the partition induced by $z_{j}$ in the true tree. We
then identified $z'_{i}$ with $z_{j}$ that minimised $m(i,j)$. Note that this
matching procedure is not necessarily a permutation -- indeed, multiple latents
in the perturbed tree could be matched to a single latent in the unperturbed
tree. The calculation of partition mismatch index is illustrated in \cref{fig:partition-mismatch}.

\begin{figure}[h]
  \centering
  \includegraphics[width=\linewidth,keepaspectratio]{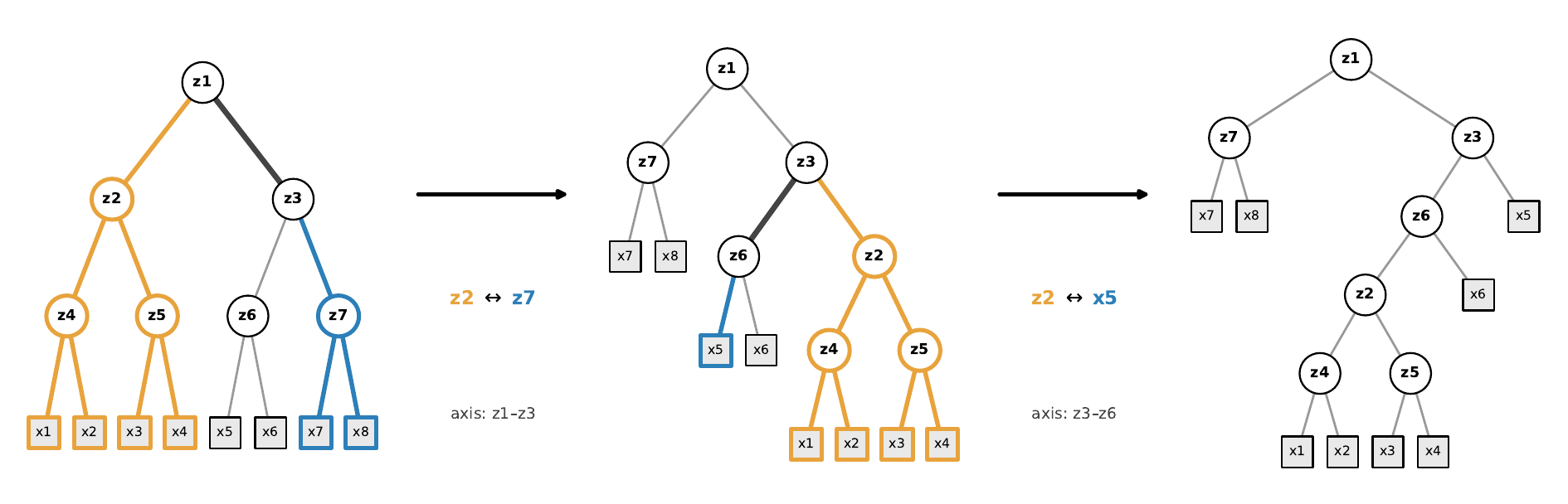}
  \caption{An example of a perturbed tree created by $k=2$ NNI moves}
  \label{fig:nni-procedure}
\end{figure}

\begin{figure}[h]
  \centering
  \includegraphics[width=\linewidth,keepaspectratio]{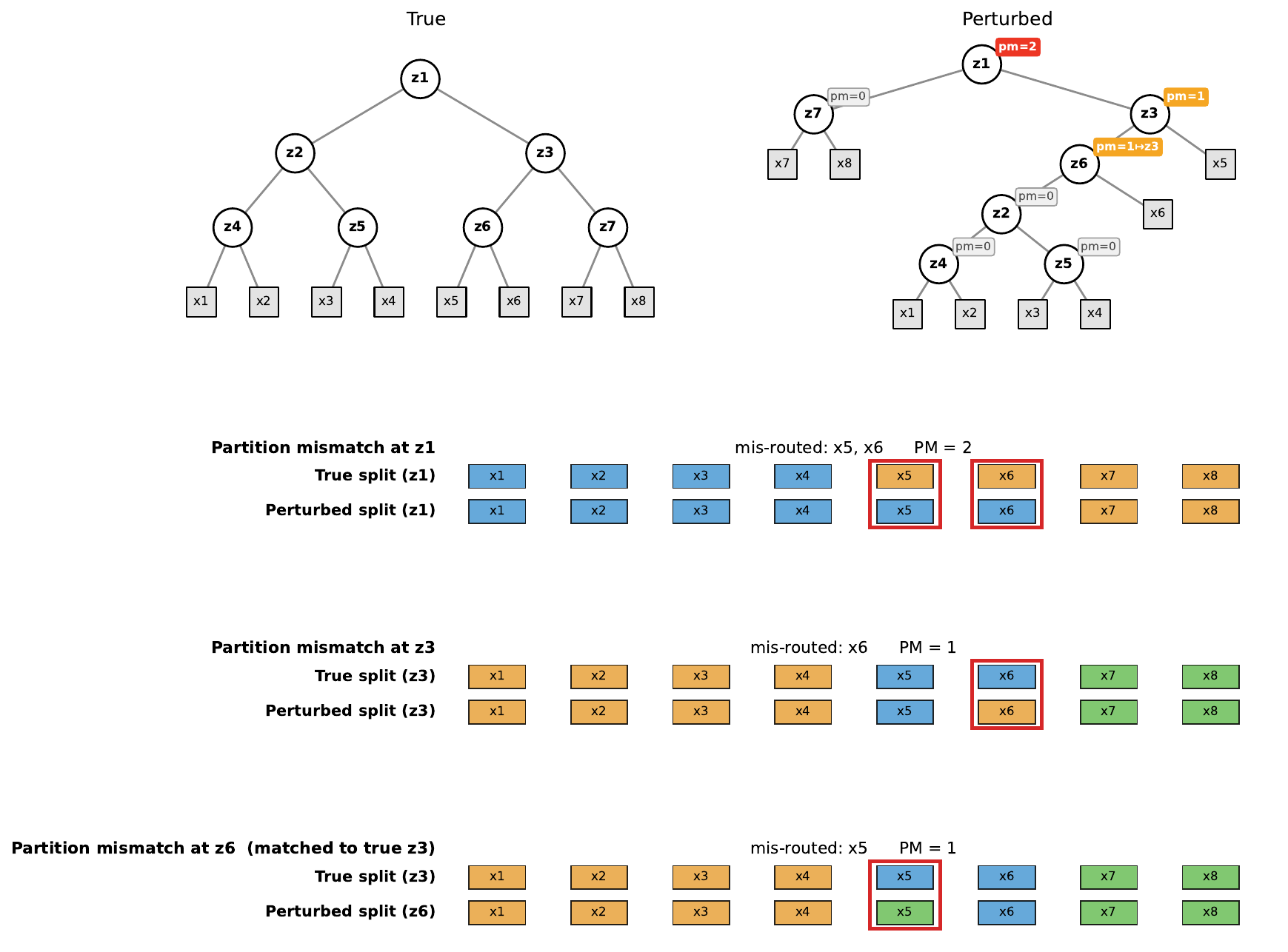}
  \caption{The perturbed tree from \cref{fig:nni-procedure}, with the partition mismatch index of each latent. Note that in the perturbed tree, both $z_{3}$ and $z_{6}$ are matched to the original latent $z_{3}$.}
  \label{fig:partition-mismatch}
\end{figure}

\FloatBarrier
\subsection{Nonlinear Tree}\label{subsec:nonlinear_tree_exp_details}
We next examine a similar example, but with each variable sampled from a \textit{nonlinear} transformation of its parent (a clipped inverse) with additive Gaussian noise.

For the non-linear tree model, we generated data from a hierarchical sampling process with the same tree structure, but with the conditional mean of each node given by a (noisy) non-linear transformation of its parent variable:
\begin{equation*}
    v = f(\text{pa}(v)) + \sigma_v\xi_v.
\end{equation*}

We ensured that the latent variables all have marginal priors with zero mean and unit variance, by ``scaling'' the non-linearities.%
\footnote{note that the marginal prior is \emph{not} a Gaussian: we simply standardized the first two moments}
Specifically, we set:
\begin{equation*}
    v = \sqrt{1-\alpha}\frac{f(\text{pa}(v))-\mu_{\text{pa}(v)}}{\sigma_{\text{pa}(v)}} + \sqrt{\alpha}\xi_v,
\end{equation*}
where $\mu_{\text{pa}(v)},\sigma_{\text{pa}(v)}$ are the mean and standard-deviation of $\text{pa}(v)$ (computed empirically over the sampling batch), $\xi_v$ is independent standard Gaussian noise, and $\alpha$ roughly controls the signal-to-noise ratio in $v$.

In the experiment we set $f(x)=\operatorname{clip}(\frac{1}{x},\left[-15,15\right])$ and $\alpha=0.2$.

For the non-linear model, the joint probability is not multivariate Gaussian, and exact inference is intractable.
We therefore compared RAMP to Black-Box Variational Inference (BBVI), as well as to a Gaussian approximation which constructs an approximate posterior in the same way as in \cref{eq:linear_tree_exact_posterior}, using the 2\textsuperscript{nd} order statistics of the model. 
Note that this Gaussian approximation is a ``supervised'' baseline, since access to the full covariance matrix (including the latent variables) is required.

The hyperparameter settings for the experiment can be found in \cref{tab:experimental_setup_trees}. The BBVI model was the same as that described for the linear tree. Results from this experiment are reported in \cref{subsec:nonlinear_tree_additional_results}.

\subsection{Pendulum}\label{subsec:pend_exp_details}
The state of a simple physical pendulum is characterized by its angle and angular velocity. We sampled trajectories by numerically integrating the dynamical system:
\begin{align*}
    \dot{\theta} &= \omega \\
    \dot{\omega} &= -\sin(\theta)
\end{align*}

We set $T_\text{sim}=10$ and $dt=0.001$. Integration was performed by the \texttt{scipy.integrate.odeint} function with default settings. The initial conditions were sampled uniformly $\theta\in\left[-\pi,\pi\right]$, $\omega\in\left[-3,3\right]$. Note that this generates both oscillating and rotating trajectories of the pendulum. For rotating trajectories, we have interpreted all angles to lie in the interval $[-\pi,\pi]$ for later analysis.

We sampled $N=2,000$ trajectories with random initial conditions, and then sub-sampled each trajectory in intervals of 100 frames, resulting in $T=100$ timesteps in each trajectory. For these 100 timesteps, we have rendered a $128\times128$ grayscale image depicting the pendulum at the corresponding angle (\cref{fig:pendulum_data_viz}).

\begin{figure}[h!]
    \centering
    \includegraphics[width=\linewidth]{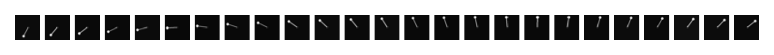}
    \caption{First 25 frames of one of the trajectories in the pendulum dataset}
    \label{fig:pendulum_data_viz}
\end{figure}

In RAMP, we set the marginal prior over each $z_t$ to a 2-dimensional Gaussian distribution (we have also explored different priors; see \cref{subsec:pendulum_additional_results}).
As explained in \cref{subsec:pendulum}, parameters were shared for each group of ``forward'' ($g\msg.t{t+1}$), ``backward'' ($g\bck.t{t+1}$) and ``observations'' ($g\msg.{x_t}t$) amortisers.
Each of those was parametrised by a neural network with 1 hidden layer. We ensured that messages reflected valid Gaussian natural parameters by having the network generate the Cholesky factor of the precision matrix. 

We performed a hyperparameter grid search over the width of these hidden layers and the learning rate (see also \cref{fig:pendulum_sweep}), with the reported run in \cref{subsec:pendulum} having 128 hidden units for all networks and a learning rate of 0.0005. The batch size was kept fixed across runs at 200 sequences per batch.

\subsection{Pose Estimation}

Human pose estimation from a single image typically 
% involves inferring the 3D body pose from the 2D position of the N body joints.
% often 
follows a two-step pipeline. In step one, the 2D position of the $N$ body joints are detected using a computer vision method. In step two, the 2D joint positions detected in step one are used to infer the 3D body pose. 
Step two of the pipeline is typically treated either as a supervised learning problem involving 2$D$-to-3$D$ regression of the Cartesian joint coordinates \citep{Noguer} or as an optimization problem  involving an explicit generative (camera) model of how a 3D pose projects to a 2D image \citep{BogoSMPL,DBLP:conf/cvpr/WangWLYG14}. In this work, we take a fundamentally different approach; pose estimation with RAMP is entirely unsupervised and does not require a generative model to be specified due to recognition parameterisation.

We trained RAMP on a dataset of 10,000 poses derived from the LSP dataset \citep{Johnson11}.
For each original image in the dataset, we created a simplified ``stick-man'' version by drawing the lines connecting adjacent joints, based on the labels.
The observations for RAMP consist of local image patches centered around each joint in these (derived) images, but without global location information. An overview of the method is shown in \cref{fig:pose_method}.

We set the dimensionality of all the latent variables to $6$, with a multivariate Gaussian prior of $\mathcal{N}(\boldsymbol{0},\mathbf{I})$.
Message-passing factors between the latent variables were parametrised as fully-connected neural networks with one hidden layer of 64 units.
The observation-to-latent recognition factors were parameterised as convolutional neural networks with three convolutional layers.
The first layer employed 32 filters with a 3×3 kernel and a stride of 1, the second layer used 64 filters with a 3×3 kernel and a stride of 2, and the third layer used 32 filters with a 3×3 kernel and a stride of 2. 
The model was trained on minibatches of size 200 for 3,000 iterations with a learning rate of 0.0001.

\begin{figure}[h]
    \centering
    \includegraphics[width=1\linewidth]{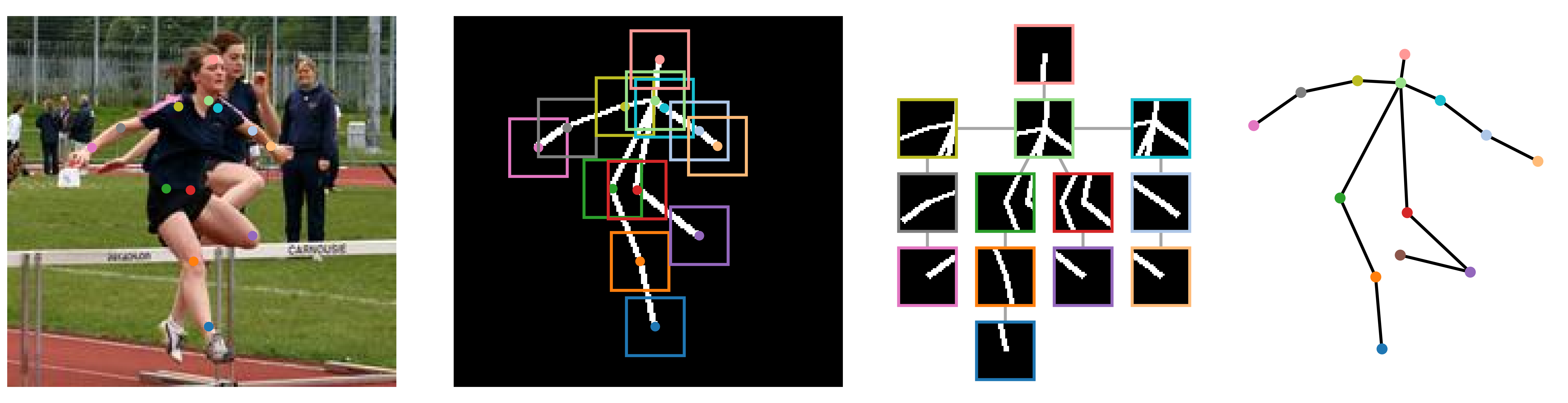}
    \caption{Pose estimation task. The LSP dataset consists of images (left) and joint positions in image space (overlaid as coloured circles for visualisation). The left ankle joint has no position label as it is occluded in this image. Each image was converted to a stick-figure representation (centre left) by drawing 
    % white 
    lines 
    between joints. 
    % on a black background. 
    The left lower leg has not been drawn as the left ankle joint position is unknown. Patches of images (32 $\times$ 32) centred at each joint (coloured bounding boxes) were used as local joint observations (centre right).
    % , edges between joints in the graph shown as grey lines).
    Note that the observation at the left ankle joint is missing. The posterior means 
    % at each joint 
    inferred by RAMP were used to construct a sketch of the pose (right).}
    \label{fig:pose_method}
\end{figure}

\FloatBarrier
\section{Additional Experimental Results}\label{supp:exp-results}
%If you have additional experimental results, you may include them in the supplementary materials.
\subsection{Linear tree}\label{subsec:linear_tree_additional_results}

\begin{figure}[h]
  \centering
  \includegraphics[width=0.35\linewidth]{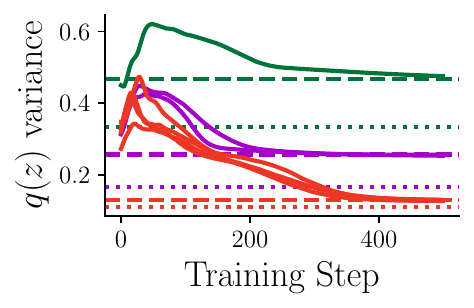}
  \caption{Posterior marginal variances (average over the dataset) inferred by RAMP, as a function of training step. Despite having no explicit model of the posterior pairwise marginals, the marginal variances learned by RAMP converge to the exact posterior variances (dashed lines), not those of the ``naive'' Mean-Field (dotted lines) achieved by restricting the posterior to factorise over the latents, indicating that correlations among latent variables are implicitly encoded in the model.}
  \label{fig:linear_tree_covs_ramp}
\end{figure}

\begin{figure}[h]
    \centering
        \hspace{.1\linewidth}BBVI\hspace{.35\linewidth}RAMP\hfill\\
        \includegraphics[width=0.35\linewidth]{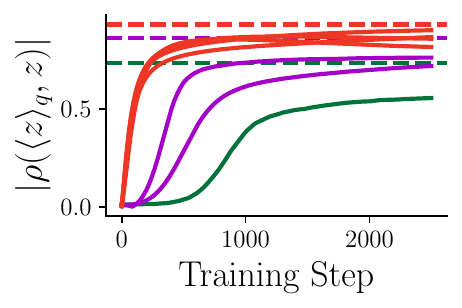}\hspace{2em}%
        \includegraphics[width=0.35\linewidth]{figures/tree_example_linear_corrs.pdf}
    \caption{Pearson correlation between inferred posterior means and true latent in the linear tree model as a function of training step for the BBVI (left) and RAMP (right; panel reproduced from \cref{fig:hierarchical_models} for reference). Dashed lines denote optimal values of the exact posterior. Note the differences in training time axes, as well as the converged performance level, between BBVI and RAMP.}
    \label{fig:bbvi_learning_curve_linear_tree}
\end{figure}

\begin{figure}[h]
    \centering
    \includegraphics[width=0.5\linewidth]{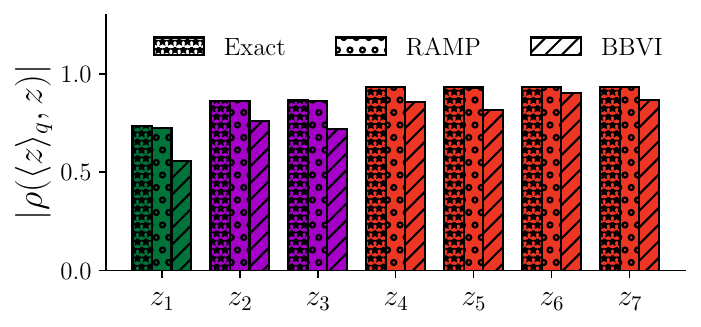}
    \caption{Pearson correlation between inferred posterior means (at the end of training) and true latents for RAMP, BBVI, and Exact inference in the linear tree model.}
    \label{fig:linear_tree_corrZ_BBVI}
\end{figure}

\FloatBarrier
\subsection{Nonlinear tree}\label{subsec:nonlinear_tree_additional_results}

The flexible recognition of RAMP enables it to discover an appropriate representational scheme for the latent variables. 
In this case, the model learns to reliably represent and infer the reciprocals of the latents, $\frac{1}{z_k}$. RAMP is able to learn a better model of the latents compared to Black Box methods that depend on an explicitly parametrised generative model, and also outperforms a supervised baseline that assumes access to the covariance matrix of the true model (\cref{fig:tree_nonlinear}).

\begin{figure}[h]
    \centering
    \includegraphics[width=.50\linewidth]{figures/non_linear_spearman_rho_Z.pdf}
    
    \caption{
    Hierarchical nonlinear tree-structured model. Bars show Spearman correlation between inferred means and true latents (colours indicate variable depth as in \cref{fig:hierarchical_models}).
    RAMP learns to infer latents more accurately than either BBVI, or a Gaussian approximation based on the full covariance matrix (requiring knowledge of the true latents), particularly for latents ``further away'' from the observations.
    }
    \label{fig:tree_nonlinear}
\end{figure}

The sampling process of the nonlinear tree model (\cref{subsec:nonlinear_tree_exp_details}) suggests that a natural solution for the problem would be to represent $z^{-1}$ (instead of $z$). Note that at least in the last layer of latent variables, the observations are indeed linear-Gaussian in $z^{-1}$. We hypothesized that the flexible recognition scheme used in RAMP might pick up this representation. Indeed, as shown in \cref{fig:non_linear_tree_pearson_invZ,fig:non_linear_tree_scatter_invZ} we found high \textit{Pearson} correlation between the posterior means inferred by RAMP and the (clipped) \textit{reciprocals} of the true latents.
\begin{figure}[h]
    \centering
    \includegraphics[width=0.5\linewidth]{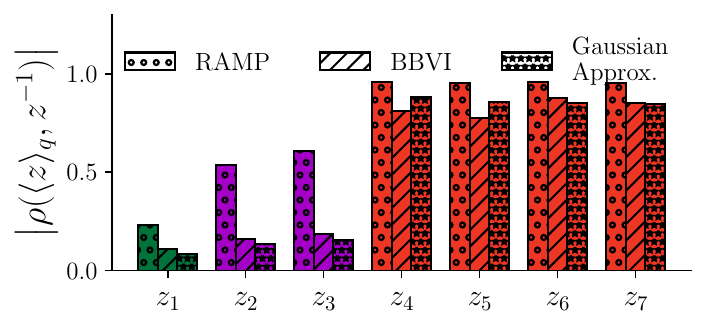}
    \caption{The Pearson correlation between inferred posterior means and the reciprocals (clipped at $[-15,15]$) of the true latents for RAMP, BBVI, and the Gaussian approximation.}
    \label{fig:non_linear_tree_pearson_invZ}
\end{figure}

\begin{figure}[h]
    \centering
    \includegraphics[width=0.47\linewidth]{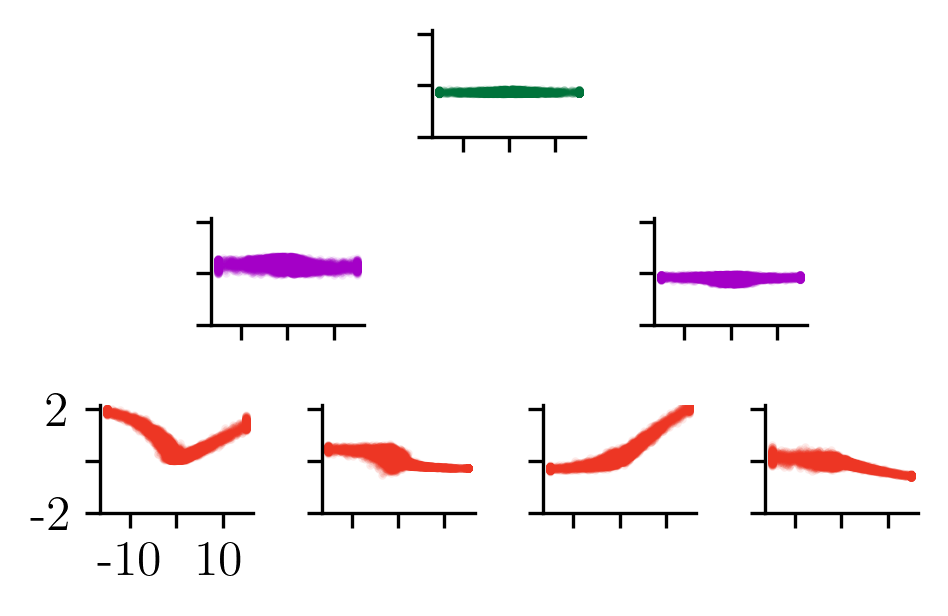}%
    \includegraphics[width=0.47\linewidth]{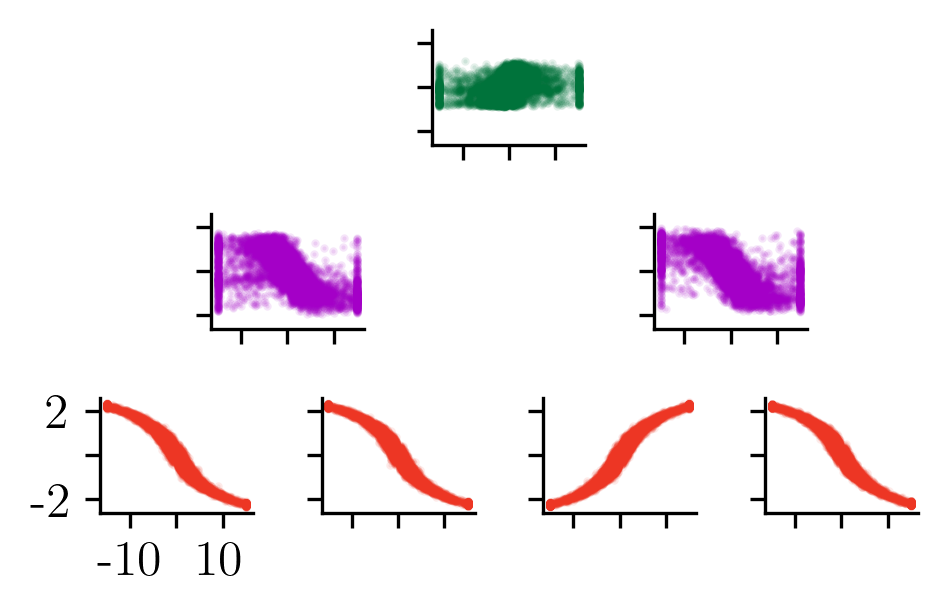}
    \caption{Scatter plot of the reciprocal of true latents (clipped at $[-15,15]$ against the posterior means inferred by RAMP before (left) and after (right) learning, in the nonlinear tree dataset.}
    \label{fig:non_linear_tree_scatter_invZ}
\end{figure}

\FloatBarrier
\subsection{Pendulum}\label{subsec:pendulum_additional_results}
To explore the robustness of RAMP's ability to learn informative latents to hyperparameters, we examined the learned latent spaces from all runs in the hyperparameter sweep (detailed in \cref{subsec:pend_exp_details}) in \cref{fig:pendulum_sweep}.

Because message-passing in RAMP is amortised through a set of neural-networks, the method is not restricted to using only Gaussian distributions for the latent variables.
\cref{fig:pendulum_beta} displays the learned latent space on the pendulum dataset from a model in which the prior and recognition factors of each $z_k$ are a product of two Beta distributions.
With this choice, we are able to impose a proper uniform prior (on the square $[0,1]\times[0,1]$) for the latents representations.

\begin{figure}[h]
    \centering
    \includegraphics[width=\linewidth]{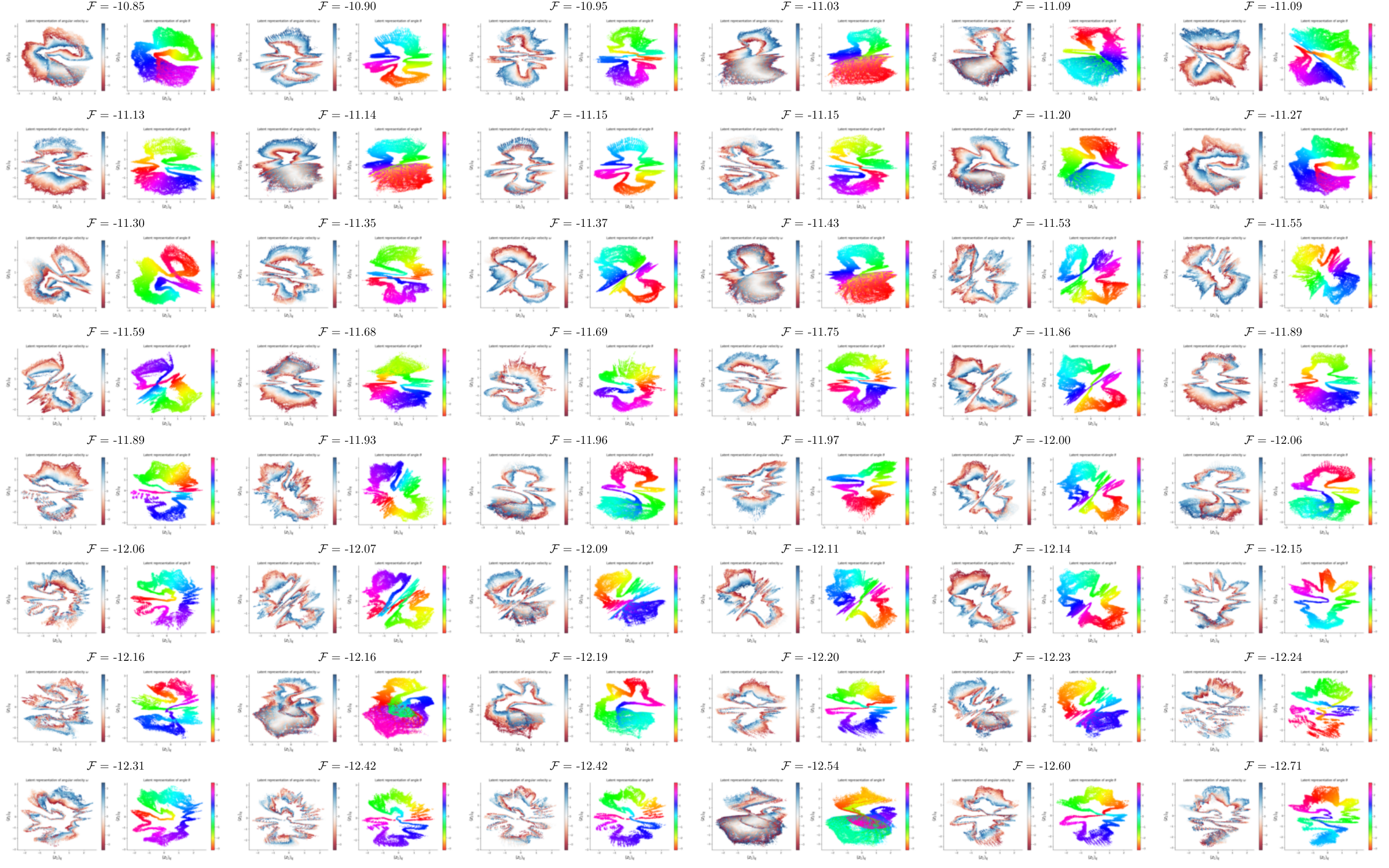}
    \caption{Latent space visualisations of all RAMP models from the pendulum hyperparameter sweep with Gaussian prior, ordered by increasing free energy. RAMP consistently and reliably learns a clean latent representation that distinguishes between different angles and angular velocities.}
    \label{fig:pendulum_sweep}
\end{figure}

\begin{figure}[h]
    \centering
    \includegraphics[width=\linewidth]{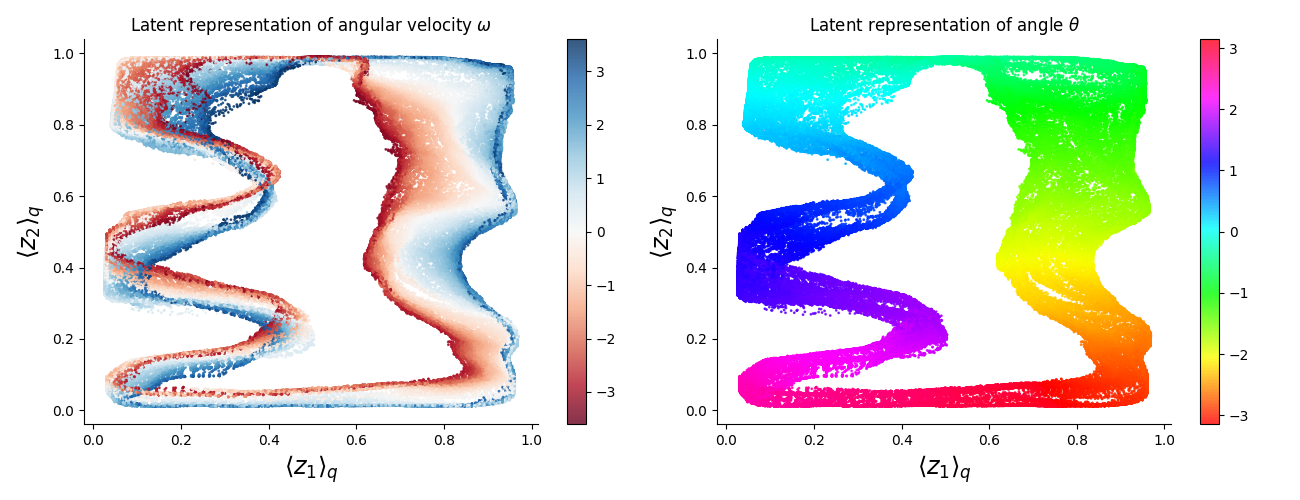}
    \caption{The learned latent space from a RAMP model with bivariate beta prior on the pendulum dataset.}
    \label{fig:pendulum_beta}
\end{figure}

\FloatBarrier
\subsection{Pose Estimation}

% For each joint and each of its neighbouring joints, 
We performed linear regression from the 6D inferred posterior mean at each joint to the 2D cosine and sine of the angle of the edge (limb) connecting the joint to a neighbouring joint.
This revealed a clear encoding of limb orientation in the inferred latent variables (Figure \ref{fig:pose_projections}).
We also performed nonlinear kernel ridge regression with an RBF kernel using scikit-learn with default hyperparameters. The resulting $R^2$ values are shown in Table \ref{tab:joint_r2} and show a strong correlation between the inferred latent variables and limb orientation.

% For each joint, we performed linear regression between the inferred posterior mean and the orientation of the limb connecting that joint to each of its neighbouring joints.

% For each joint, the
% The 6D posterior mean inferred at each joint was projected onto

% We performed linear regression from the posterior mean inferred at each joint

\begin{figure}[h]
    \centering
    \includegraphics[width=1\linewidth]{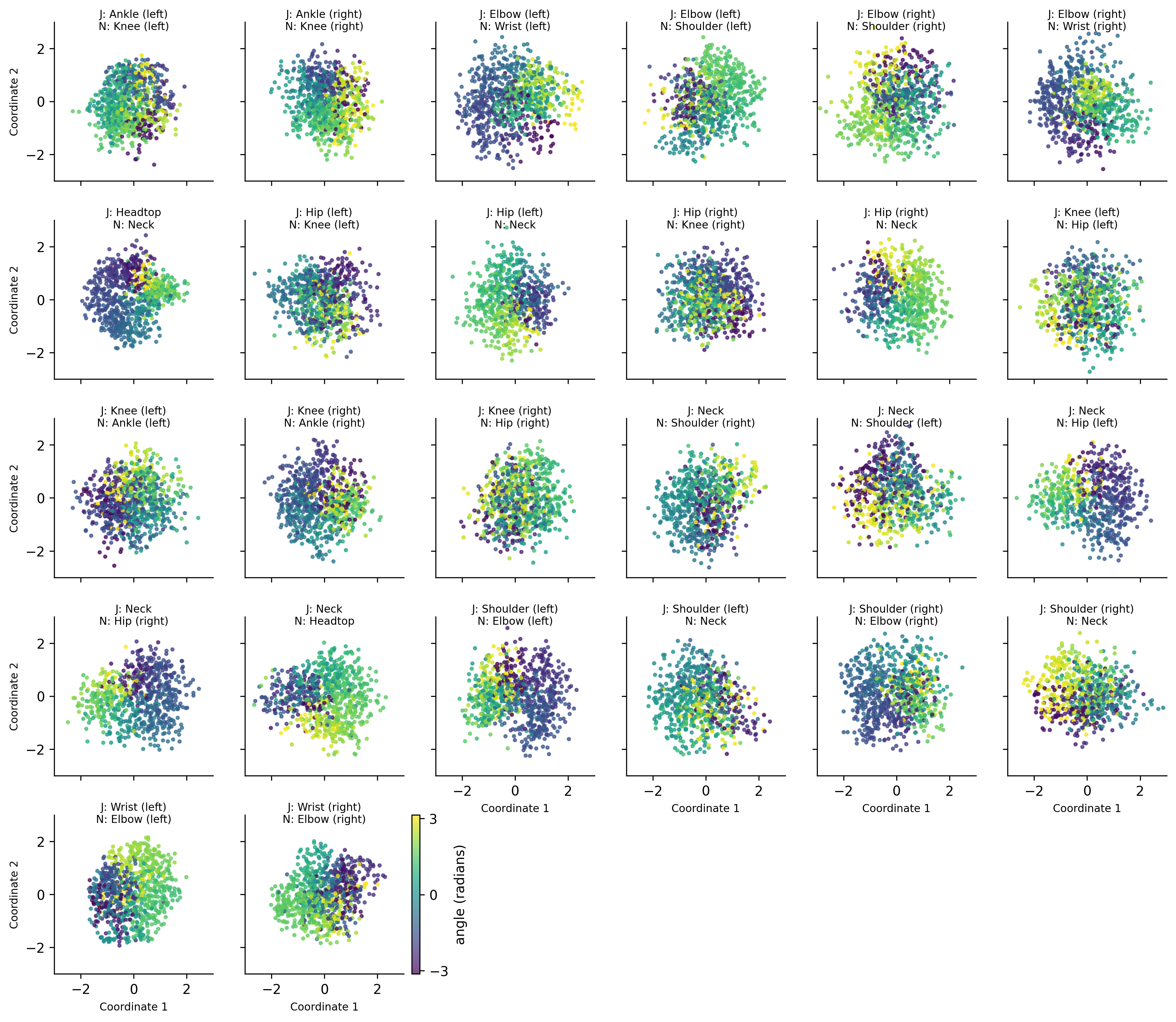}
    \caption{The 6D posterior means inferred at joint J are shown projected onto the two (orthonormalised) coefficient vectors obtained by performing linear regression from those means to the cosine and sine of the angle of the edge connecting joint J to neighbouring joint N. Each dot corresponds to a single image and is coloured based on the true angle of the edge connecting joint J to joint N.
    }
    \label{fig:pose_projections}
\end{figure}

\begin{table}[ht]
\centering
\footnotesize
\caption{$R^2$ values from kernel ridge regression predicting the orientation of edges between each joint and each of its neighbours, based on the inferred posterior means at the joint.}
\label{tab:joint_r2}
\begin{tabular}{l l c}
\hline
Joint & Neighbour & $R^2$ \\
\hline
Ankle (left) & Knee (left) & 0.83 \\
Ankle (right) & Knee (right) & 0.84 \\
Elbow (left) & Wrist (left) & 0.67 \\
Elbow (left) & Shoulder (left) & 0.56 \\
Elbow (right) & Shoulder (right) & 0.53 \\
Elbow (right) & Wrist (right) & 0.65 \\
Head (top) & Neck & 0.77 \\
Hip (left) & Knee (left) & 0.46 \\
Hip (left) & Neck & 0.68 \\
Hip (right) & Knee (right) & 0.50 \\
Hip (right) & Neck & 0.71 \\
Knee (left) & Hip (left) & 0.44 \\
Knee (left) & Ankle (left) & 0.73 \\
Knee (right) & Ankle (right) & 0.79 \\
Knee (right) & Hip (right) & 0.44 \\
Neck & Shoulder (right) & 0.40 \\
Neck & Shoulder (left) & 0.41 \\
Neck & Hip (left) & 0.79 \\
Neck & Hip (right) & 0.81 \\
Neck & Head (top) & 0.62 \\
Shoulder (left) & Elbow (left) & 0.65 \\
Shoulder (left) & Neck & 0.44 \\
Shoulder (right) & Elbow (right) & 0.56 \\
Shoulder (right) & Neck & 0.40 \\
Wrist (left) & Elbow (left) & 0.77 \\
Wrist (right) & Elbow (right) & 0.74 \\
\hline
\end{tabular}
\end{table}

\end{document}